\documentclass[journal]{IEEEtran}

\usepackage{cite}
\usepackage{amsmath,amssymb,amsfonts}
\usepackage{graphicx}
\usepackage{textcomp}
\usepackage{subfig}
\usepackage[flushleft]{threeparttable} 
\usepackage{booktabs}
\usepackage{algorithm} 
\usepackage{algpseudocode}
\usepackage[colorlinks = true,
            linkcolor = blue,
            urlcolor  = blue,
            citecolor = blue,
            anchorcolor = blue]{hyperref}
\usepackage{microtype}
\allowdisplaybreaks

\usepackage{url}

\usepackage{comment}

\usepackage{multicol}
\usepackage{multirow}
\usepackage{dsfont}
\usepackage{mathtools}
\usepackage{cuted}
\usepackage[makeroom]{cancel}
\usepackage{stmaryrd}
\usepackage{booktabs, makecell}
\usepackage{pifont}
\usepackage{color}

\definecolor{b2}{RGB}{51,153,255}
\definecolor{p2}{RGB}{121,64,255}

\newcommand{\ie}{{\it{i.e.,~}}}
\newcommand{\eg}{{\it{e.g.,~}}}

\newcommand{\etal}{{\it{ et al.~}}}

\newcommand{\R}{\mathcal{R}}
\newcommand{\T}{\mathcal{T}}
\newcommand{\bE}{\mathbb{E}}
\newcommand{\cL}{\mathcal{L}}
\newcommand{\bW}{\mathbf{W}}
\newcommand{\bw}{\mathbf{w}}

\newcommand{\bR}{\mathbb{R}}
\newcommand{\bY}{\mathbf{Y}}

\newcommand{\ours}{\textbf{\texttt{GATE}}}

\DeclareMathOperator*{\argmin}{arg\,min}

\newcommand{\revisionR}[1]{{#1}}
\newcommand{\revision}[1]{#1}
\newcommand{\revisionfoot}[1]{}

\usepackage{setspace}
\usepackage{arydshln} 
\usepackage[normalem]{ulem}
\newtheorem{theorem}{Theorem}
\newtheorem{proof}[theorem]{Proof}

\newtheorem{remark}[theorem]{Remark}
\newtheorem{lemma}[theorem]{Lemma}

\usepackage{url}
\usepackage{rotating}
\usepackage{comment}
\usepackage{multicol}
\usepackage{multirow}
\usepackage{dsfont}
\usepackage{mathtools}
\usepackage{cuted}
\usepackage[makeroom]{cancel}
\usepackage{stmaryrd}
\usepackage{booktabs, makecell}
\usepackage{colortbl}
\definecolor{mygray}{gray}{.9}
\usepackage{pifont}

\def\BibTeX{{\rm B\kern-.05em{\sc i\kern-.025em b}\kern-.08em
    T\kern-.1667em\lower.7ex\hbox{E}\kern-.125emX}}

\begin{document}

\title{GATE: \textbf{G}raph CC\textbf{A} for \textbf{T}emporal S\textbf{E}lf-supervised Learning for Label-efficient fMRI Analysis}

\author{Liang Peng, Nan Wang, Jie Xu, Xiaofeng Zhu, \IEEEmembership{Senior Member, IEEE} and Xiaoxiao Li \IEEEmembership{Member, IEEE}
\thanks{L. Peng and N. Wang are contributed equally to this paper.}
\thanks{L. Peng and J. Xu are with Center for Future Media and Department of Computer Science and Engineering, University of Electronic Science and Technology of China, Chengdu 610054, China.}
\thanks{N. Wang is with School of Computer Science and Engineering, East China Normal University, Shanghai, 200062, China, and the University of British Columbia, Vancouver, BC V6T 1Z4  Canada}
\thanks{X. Zhu is with 
School of Computer Science and Engineering, University of Electronic Science and Technology of China, Chengdu 610056, China, and also with the Shenzhen Institute for Advanced Study,
University of Electronic Science and Technology of China, Shenzhen, China.}
\thanks{X. Li is with Electrical and Computer Engineering, the University of British Columbia, Vancouver, BC V6T 1Z4  Canada (e-mail:xiaoxiao.li@ece.ubc.ca).}
}
\maketitle

\begin{abstract}

In this work, we focus on the challenging task, neuro-disease classification, using functional magnetic resonance imaging (fMRI).
In population graph-based disease analysis, graph convolutional neural networks (GCNs) have achieved remarkable success.
However, these achievements are inseparable from abundant labeled data and sensitive to spurious signals.
To improve fMRI representation learning and classification under a label-efficient setting, we propose a novel and theory-driven self-supervised learning (SSL) framework on GCNs, namely \textbf{G}raph CC\textbf{A} for \textbf{T}emporal s\textbf{E}lf-supervised learning on fMRI analysis (\ours).
Concretely, it is demanding to design a suitable and effective SSL strategy to extract formation and robust features for fMRI. To this end, we investigate several new graph augmentation strategies from fMRI dynamic functional connectives (FC) for SSL training. Further, we leverage canonical-correlation analysis (CCA) on different temporal embeddings and present the theoretical implications. 
Consequently, this yields a novel two-step GCN learning procedure comprised of (i) SSL on an unlabeled fMRI population graph and (ii) fine-tuning on a small labeled fMRI dataset for a classification task. 
Our method is tested on two independent fMRI datasets,  demonstrating superior performance on autism and dementia diagnosis. 
Our code is available at \textit{https://github.com/LarryUESTC/GATE}.
\end{abstract}

\begin{IEEEkeywords}
Graph Convolutional Network, fMRI analysis, Label-efficient Learning, Self-supervised Learning
\end{IEEEkeywords}

\section{Introduction}
\label{sec_introduction}

As the brain Functional Connectivity (FC) derived from functional Magnetic Resonance Imaging (fMRI) can capture abnormal brain functional activity \cite{10.1227NEU}, it has been widely applied in disease diagnosis.
Recently, to quantify changes in brain connectivity over time, many researches apply the sliding window method to extract dynamic FC matrices from Blood-Oxygen-Level Dependent (BOLD) signals~\cite{DynamicFC}. FC representations will be further used for disease diagnosis through machine learning methods.

With the success of machine learning, deep learning methods have made breakthroughs in fMRI classification, such as Convolutional Neural Networks (CNNs) \cite{li2018brain}, Recurrent Neural Networks (RNNs) \cite{dvornek2019jointly}, and Graph Convolutional neural Networks (GCNs) \cite{parisot2018disease}.
Indeed, GCNs are widely used for disease diagnosis \cite{parisot2018disease,zhou2021contrast}, due to the biomarkers across all subjects are critical for recognizing the common patterns associated with diseases.
Previous studies have explored the development of GCNs for various prediction tasks on fMRI data \cite{parisot2018disease,ghorbani2022ra,li2018brain,bessadok2021graph}. For example, Parisot \etal~\cite{parisot2018disease} apply vanilla GCN for supervised disease prediction with fMRI data. However, these studies depend heavily on large-scale labeled$/$annotated data to get promising results.
Conversely, this condition is usually not satisfied in clinical practice due to the costly and complex annotation process. For example, the annotation of autism samples requires doctor scoring on child’s behavior and developmental history (typically from a few months to years) following a complex protocol~\cite{hyman2020executive}.

To leverage unlabeled data and assistant representation learning on the label-efficient data \cite{luo2017label} (\ie a small portion of labeled data), Self-Supervised Learning (SSL) has emerged as a powerful approach of unsupervised learning~\cite{sun2021context}. 
Finding the suitable SSL strategy for fMRI signals is essential and existing SSL strategies can be generally divided into three categories: contrastive-based SSL, reconstruction-based SSL, and similarity-based SSL.
Contrastive-based SSL~\cite{chen2020simple,NEURIPS2020_63c3ddcc,zhou2021contrast} requires selecting diverse negative samples to form contrastive loss (\eg InfoNCE loss \cite{oord2018representation} and triple loss \cite{mo2022simple}), which is difficult for disease classification using fMRI due to the limited number of samples and a small number of classes.
Reconstruction-based SSL \cite{he2021masked,qiu2022vgaer} contains an encoder-decoder structure to reconstruct input, but it is not practical as fitting the low signal-to-noise ratio fMRI features may overfit to spurious features~\cite{he2021masked}. 
Therefore, we focus on similarity-based SSL~\cite{grill2020bootstrap,he2020momentum} which forces the similarity of multiple views of the same data (\eg data and its augmentation) and can provide the best practical value to assist node classification on the graph with unlabeled fMRI data.

\begin{figure}
    \centering
    \includegraphics[width=0.98\linewidth]{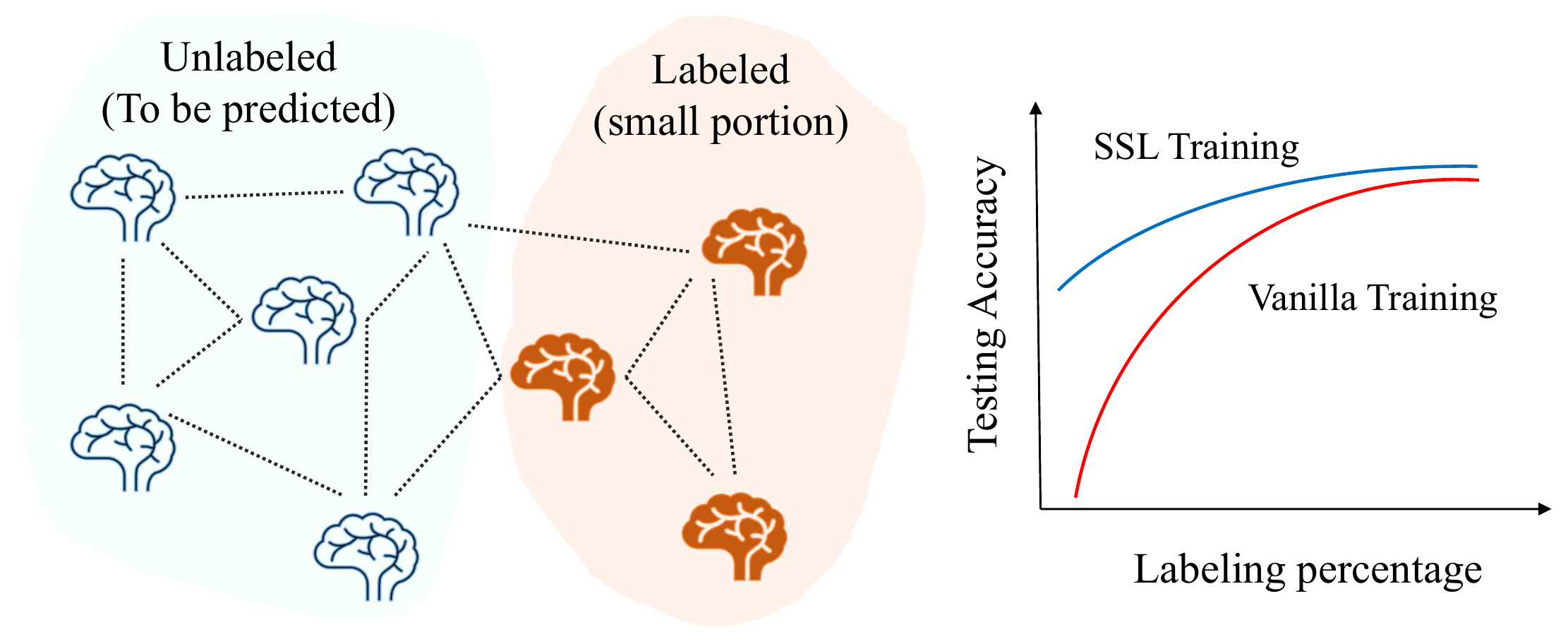}
    \caption{Illustration. This work focuses on leveraging self-supervised learning (SSL) strategy to improve fMRI prediction performance on the population graph with limited labels (left), where each node indicate a subject. Empirical studies (right, also see Sec.~\ref{sec_experiments}) have shown SSL achieves higher testing accuracy than vanilla training when a small percent of data are labeled on the graph.}
    \label{fig:intro}
\end{figure}

However, the promising similarity-based SSL strategy brings the two unique challenges for GCN-based fMRI analysis, which have been un-explored so far: 

\noindent \textbf{Challenge 1: How to find suitable augmentations for fMRI analysis to generate different views from one BOLD signal?\\} 
\revision{
Modern SSL methods rely heavily on applying various data augmentations to create different views of the sample \cite{ni2021close}. As shown by Tian \etal \cite{tian2020makes}, SSL methods take advantage of maximizing mutual information between the different views of the sample to generate discriminative representations/embeddings. 
However, not all data augmentations have a positive effect on SSL. As suggested by Wang \etal \cite{wang2022chaos}, a suitable/helpful data augmentation for SSL should satisfy certain principles. Moreover, as suggested by~\cite{chen2020self}, the augmentations should reduce the correlation between the spurious features and target labels.
}\revisionfoot{R1}

\noindent \textbf{Challenge 2: How to design the corresponding consistency loss for SSL training on fMRI analysis?\\} 
The consistency among correlated signals should be maximized. As a classical multi-variate analysis method, Canonical Correlation Analysis (CCA)~\cite{thompson1984canonical} has been widely used in fMRI analysis \cite{friman2003adaptive} and aims to maximize the correlation between two representations.

To address aforementioned challenges, we propose \textbf{G}raph CC\textbf{A} for \textbf{T}emporal s\textbf{E}lf-supervised learning on fMRI analysis (\ours{}), to achieve promising results 
 with the assistance of pre-training on unlabeled fMRI data and fine-tuning on small labeled data (see Fig. \ref{fig:intro}). 
Specifically, as shown in Fig. \ref{fig_framework}, we first develop an augmentation strategy for fMRI analysis that generates two related views from the BOLD signals. 
\revisionR{The main motivation is to capture the critical information related to diseases from two diverse views using SSL techniques.}
With these two diverse views, we conduct a GCN encoder to obtain their embedding matrices which extract the associations among subjects.
Furthermore, the CCA-based objective function is performed to maximize the correlation of the representations from two views.  
The novelties and contributions of this work could be summarized as follows:
\begin{itemize}
\item We propose a novel SSL method (\ours{}) on fMRI data, which is an effective and versatile framework to solve the problem of learning on label-efficient datasets. This could bring GCN-based methods from research to clinical applications where labels are difficult to be collected.

\item The proposed \ours{} can tackle the spurious factors in dynamic FC analysis by developing a GCN-based CCA regularization with the designed multi-view temporal augmentation strategy on BOLD signals.

\item We conduct a theoretical discussion to support our motivation and prove the critical implication of how \ours{} assists learning on label-efficient data. 

\item The comprehensive comparison experiments demonstrate that \ours{} achieves state-of-the-art performance under the label-efficient setting. We also conduct extensive ablation experiments to discuss key components of our design and algorithm.

\end{itemize}

\section{Related work}
\begin{figure*}[!t]
\centering
\includegraphics[scale=0.45]{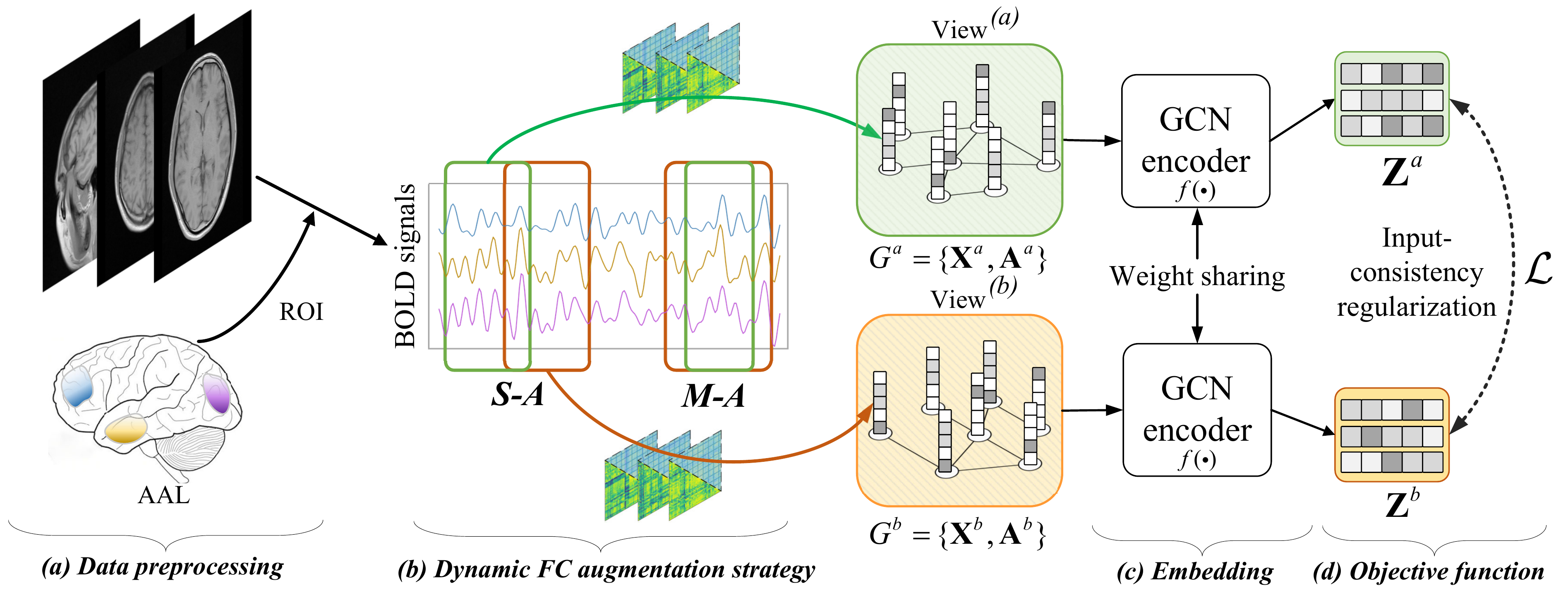}
\caption{The flowchart of the proposed method (\ours{}) for SSL-based dynamic FC representation learning. 
(a) Obtain BOLD signals by data preprocessing with AAL template.
(b) Two views $G^{a}$ and $G^{b}$ are randomly generated from the BOLD signals by our augmentation strategy (\ie \textbf{\textit{S-A}} and \textbf{\textit{M-A}}).
(c) \ours{} employs the GCN encoder to obtain embeddings  $\mathbf{Z}^{a}$ and $\mathbf{Z}^{b}$ of these two views.
(d) The consistency of the embeddings is regularized by optimizing CCA-based loss (\ie Eq.~\eqref{eq:loss}). }
\vspace{-3mm}
\label{fig_framework}
\end{figure*}
\subsection{Disease Prediction on fMRI data} \label{sec_related_a}

Medical imaging techniques such as Computed Tomography (CT), Magnetic Resonance Imaging (MRI), Positron Emission Tomography (PET), and X-ray have been employed for diagnosis and early detection of diseases. For example, Kam \etal utilized multiple convolutional neural networks  to conduct early mild cognitive impairment diagnosis  \cite{kam2018novel}.
In neuroscience, MRI is widely used to study neurological disorders, which can be separated between structural MRI and functional MRI depending on the detected type of connectivity.
In structural MRI, nodes are defined by anatomical connections between regions or brain tissues, while edges are constructed by the topology between them. For instance, Yao \etal presented a triplet GNN model for disease diagnosis on structural MRI data \cite{yao2021mutual}.
In functional MRI (fMRI), nodes represent functional regions of the brain, while edges are constructed by correlations between their activities. For example, Wang \etal proposed a similarity-driven multi-view model  to conduct autism spectrum disease diagnosis on fMRI data \cite{wang2022multi}.
In fact, it has been a prevalent way to exploit the functional organization of human brain by analyzing functional fMRI, as shifts in human attention (\eg a stimulus or cognitive task) are associated with systematic changes in the activity of functional areas of brains \cite{heeger2002does,huang2020attention}.

There is growing evidence that functional connectivity may show dynamic changes over a short period of time ~\cite{zhang2016neural,mao2017gender,mertzios2019sliding}.
Previous works have demonstrated that the using of sliding windows is beneficial to conduct dynamic connectivity analysis on fMRI data since the resting-state reflects different physiological states, such as wandering and muscle fatigue.
For example, 
Wang \etal first used sliding windows to divide the rs-fMRI time series into multiple  segments and then proposed a spatio-temporal convolutional-recursive neural network \cite{wang2019spatial}.
Mao \etal analyzed the dynamic functional differences between men and women based on sliding window measures and community detection  \cite{mao2017gender}.
Yao \etal proposed a temporal adaptive graph convolutional network to exploit the topological information and to model multi-level semantic information within the entire time series signals \cite{yao2020temporal}.
\revision{
Yu \etal \cite{yu2022disentangling} presented a Transformer-based method to infer functional networks in both spatial and temporal space in a self-supervised manner.
}

\subsection{ GCNs for disease prediction on fMRI data}

Deep learning has revolutionized computer-aided diagnosis in the past decade \cite{shen2017deep,fan2019birnet}. 
On this basis, deep GCNs takes the advantages of the relationship between nodes to find the common patterns/biomarkers, and there has been an increasing focus on GCNs in neuroscience \cite{li2018brain}.
Previous GCN-based methods on fMRI data can be categorized into two subgroups depending on the definition of nodes in the graph, \ie population graph-based models and brain region graph based models \cite{bessadok2021graph}. 
Also,  these two subgroups correspond to two separate tasks for fMRI analysis, \ie population graph-based models for node classification task and brain region graph based models for graph classification tasks.

In population graph-based models, the nodes in the graph denote the subjects and edges represent the similarity between subjects.   
For example, Parisot~\etal~\cite{parisot2018disease} exploits the GCN and involves representing populations as a sparse graph in which nodes are associated with imaging features, and edge weights are constructed  from phenotype information (\eg age, gene, and sex of the subjects). 
Kazi~\etal~\cite{kazi2019inceptiongcn} propose a GCN model with multiple filter kernel sizes on a population graph. 

In brain region graph based models, the nodes denote anatomical brain regions and the edges represent functional or structural connectives among these brain regions. 
For example, Li~\etal~\cite{li2018brain} conduct an interpretable GCN model on brain region graph to understand which brain regions are related to a specific neurological disorder.
Xing~\etal~~\cite{xing2019dynamic} consider a GCN model with long short term memory model on brain region graph for disease diagnosis.

Although previous GCN methods achieved promising results on both population graph and brain region graph, existing methods are still limited by large labeled data. Clearly,  designing a GCN method for limited labeled fMRI data is an important yet unsolved challenge.

\subsection{Self-supervised learning}
\label{sec:relate_ssl}
As an powerful unsupervised representation learning method, SSL has recently been developed in graph learning domain. 
Existing SSL methods can be broadly classified into three groups, \ie contrastive-based SSL, reconstruction-based SSL and similarity-based SSL.

The contrastive-based SSL methods enhance the similarity between the representations from two views (\eg global representation and local representation) by manually constructing positive and negative sample pairs. 
For example, Deep Graph Infomax (DGI) \cite{DGI} employs InforNCE \cite{oord2018representation} contrastive loss function to contrast the local node representations and the global graph representations. MVGRL \cite{hassani2020contrastive} proposes a contrastive multi-view representation learning method by contrasting local and global embeddings from two views. However, contrastive-based SSL methods rely on negative samples, which is not suitable with limited number of samples and small number of classes.

Recently, reconstruction-based SSL methods learn a representation by conducting reconstruction-based pretext tasks (\eg image inpainting and  recovering color channels).
For example, He~\etal~\cite{he2021masked} develop an SSL encoder-decoder model to reconstruct the original image from the latent representation and mask patches.
Qiu~\etal~~\cite{qiu2022vgaer} propose a graph encoder-decoder model to reconstruct the  relationships between nodes based on representation similarity. 
However, the reconstruction-based SSL needs to reconstruct input context (low dimensional features are transformed into high dimensional features), which is impractical due to fitting the low signal-to-noise ratio fMRI features may result in overfitting to spurious features.

More recently, similarity-based SSL takes the advantage of the coupling between multiple views of the same data (\ie input context and its augmentation) to learn a good representation without labeled data.
For example, Grill~\etal~\cite{grill2020bootstrap} learn representations by encoding two augmented views through two distinct graph encoders. 
Although this technique avoids the limitations of selecting negative samples in the contrasitive-based SSL method and reconstruction loss in the reconstruction-based SSL method, designing a suitable strategy for generation multiple coupled views for fMRI data is highly needed.

\section{Method}
\label{sec_method}

As shown in Fig. \ref{fig_framework}, we give an overview of our similarity-based SSL method.
Specifically, the key components of the proposed \ours{} include three parts: (a) Dynamic FC augmentation (Sec. \ref{sec_augmentation}); (b) GCN encoder (Sec. \ref{sec:embed}); (c) Objective function (Sec. \ref{sec:obj}),
and our method employs a two-step training procedure under label-efficient settings, \ie step 1: unsupervised pre-training on unlabeled fMRI data, and step 2: fine-tuning on a small portion of labeled data for a specific prediction task. This section focuses on illustrating the novel SSL strategy used in step 1 and finally introduces step 2.

\subsection{Multi-view fMRI dynamic functional connectivity generation}
\label{sec_augmentation}

The  key  idea  of  \ours{} is to ensure the consistency of two augmented views of an input content. Specifically, by adopting similarity-based SSL, spurious factors can be mitigated by augmenting its variations specifically. \revisionR{For example, if we train a model to identify ``apple'' from peach, these two views can be “red apple” and ``yellow apple''. Both views contain the main characteristics (\ie shape) of ``apple'' but vary on the spurious features (\ie color).}

\noindent \textbf{Dynamic functional connectivity.} To capture the temporal variability, the entire time \revisionR{fMRI} course was divided into multiple sub-segments by the sliding window method \cite{DynamicFC}. 
Denote BOLD signals $\mathbf{S}_{i} \in \mathbb{R}^{R \times T}$ with $R$ being the number of brain Regions-Of-Interests (ROIs) in fMRI images of the $i$-th subject and $T$ being the length of the entire segment.
Then, we construct the FC matrix ($\mathbf{F}_{i} \in \bR^{R \times R}$) by calculating the Pearson’s correlation~\cite{schober2018correlation} between the matched BOLD segments of the paired ROIs. Last, we flatten the upper triangle of the FC matrix to represent the fMRI features $\mathbf{x}_i$ for the $i$-th subject.  
\revision{
To construct the population graph $G =\{\mathbf{X}, \mathbf{A}\}$ for fMRI analysis~\cite{parisot2018disease}, where $\mathbf{X}$ represents subject features and $\mathbf{A}$ represents similarities between subjects, we extract temporal fMRI BOLD signals via constructing FCs as the node features, and follow Parisot \etal ~\cite{parisot2018disease} to obtain the initial graph $\mathbf{A}$ via $k$NN from subject features.}

Existing literature indicates that dynamic FC-based fMRI analysis is sensitive to the choice of the size of the sliding window $L$~\cite{savva2019assessment,shakil2016evaluation}. Therefore, we aim to impair the effect of perturbations on $L$, \ie the spurious features in dynamic FC. 
Although the brain FC is not static, a short time sub-segment and its surrounding time sub-segments should contain the same characteristics (\eg FC patterns) associated with neurological disease~\cite{8765628,zhang2016neural}. 
Thus,  we propose to augment fMRI data along its temporal domain with sliding windows  to obtain two relevant views ($G^{a} =\{\mathbf{X}^{a}, \mathbf{A}^{a}\} $ and $G^{b} = \{\mathbf{X}^{b}, \mathbf{A}^{b}\}$) by step window augmentation (\textbf{\textit{S-A}}) and multi-scale window augmentation (\textbf{\textit{M-A}}). 

\textbf{Step window augmentation (\textbf{\textit{S-A}})}.
As shown in Fig. \ref{fig_framework} ($a$), \textbf{\textit{S-A}} takes two neighboring sliding windows as two related views  ($G^{a}$ and $G^{b}$).
In this case, the raw BOLD signals $\mathbf{S}_{i}$ are divided into $M$ sub-segments $\{\mathbf{S}_{i}^{1},\dots,\mathbf{S}_{i}^{M}\}$ for the $i$-th subject, by setting the window size as $L$ and the step of the sliding window as $s$, resulting in $M=\lfloor \frac{T-L}{s} \rfloor +1$ sub-segments and the corresponding dynamic FCs.
In each training iteration, \textbf{\textit{S-A}} first randomly select one sub-segment (\eg $m$-th sub-segment) for the first view $G^{a} =\{\mathbf{X}^{m}, \mathbf{A}^{m}\}$, and then regard its neighboring sub-segment (\eg $(m \pm 1)$-th sub-segment) as the second view $G^{b} =\{\mathbf{X}^{m \pm 1}, \mathbf{A}^{m \pm 1}\}$.

\textbf{Multi-scale window augmentation (\textbf{\textit{M-A}})}. 
Considering that FCs generated by different window sizes contain the relevant information, \revisionR{\textbf{\textit{M-A}} takes two different scales of sliding windows as two related views.}
In this case, the $\mathbf{S}_{i}$ is divided into multiple $M$ sub-segments depending on the window size $L$. For the $m$-th sub-segment ($m\in [M]$), we sample BOLD signals using two different window sizes $l_a$ and $l_b$ to calculate FCs, forming two views of graph $G^{a} =\{\mathbf{X}^{m,l_a}, \mathbf{A}^{m,l_a}\}$ and $G^{b}=\{\mathbf{X}^{m,l_b}, \mathbf{A}^{m,l_b}\}$. These two views are fed into each training iteration of \ours{}, as shown in Fig. \ref{fig_framework} ($a$).  

It is worth mentioning that it is also possible to jointly consider \textbf{\textit{S-A}} and \textbf{\textit{M-A}}, for example, by randomly selecting one in each training iteration. In addition, \textbf{\textit{S-A}} and \textbf{\textit{M-A}} can also be easily combined with randomly drop augmentation \cite{thakoor2021bootstrapped} which is commonly used (see Sec. \ref{sec:ablation}).
\revision{In fact, data augmentation plays an essential role in SSL, however not all data augmentations have a positive effect on SSL \cite{wang2022chaos}. Hence, it is important to investigate a suitable data augmentation for SSL considering the essential role of data augmentations in SSL as well as the special properties and data structure of medical data compared with natural data. Compared with \cite{zhang2021canonical}, our method directly operates on the raw data rather than the graph representation.}
\revision{Furthermore, our proposed augmentation methods for fMRI  to generate multi-view data is a versatile solution that can be combined with any model architectures.}

\subsection{Graph embedding}
\label{sec:embed}
With the graph data from two views ($G^{a} =\{\mathbf{X}^{a}, \mathbf{A}^{a}\} $ and $G^{b} = \{\mathbf{X}^{b}, \mathbf{A}^{b}\}$), we perform an encoder to learn the patterns of the data.
Since GCN \cite{kipf2016semi} and its variants have been widely used to  capture the semantic information (\ie $\mathbf{X}$) and the structural information (\ie $\mathbf{A}$) in the graph data, in this study,  we adopt a GCN model as the encoder $f(\cdot)$ to get the embedding of all subjects in each view.
\revision{
It is worth noting that the graph encoder $f(\cdot)$ is versatile for applying other powerful graph learning encoders (\eg GAT \cite{GAT} and GIN \cite{GINConv}). For fair comparison and easily application, we apply the \revisionR{commonly} used GCN model as the encoder in this study. 
}
More specifically, the GCN operation on the $l$-th hidden layer is defined as
\begin{eqnarray}
\label{eq:gcn}
\mathbf{H}^{(l+1)}=\sigma (\mathbf{D}^{-\frac{1}{2}}\mathbf{A}\mathbf{D}^{-\frac{1}{2}}\mathbf{H}^{(l)}\mathbf{\Theta}^{(l)}),
\end{eqnarray}
where $\mathbf{D}$ is the diagonal matrix of $\mathbf{A}$, $\mathbf{H}^{(l)}$ is the output features of all subjects at the $l$-th layer, $\mathbf{\Theta}^{(l)}$ is the trained weight matrix, and $\sigma(\cdot)$ represents an activation function. 
\revisionR{The input features of Eq. (\ref{eq:gcn}) is the feature vector fMRI features (\ie $\mathbf{X}$ obtained by \textbf{\textit{S-A}} or \textbf{\textit{M-A}}).}
To make the embedding comparable, we use the same GCN encoder (weight sharing) to project two views into the same embedding space.
Consequently, we obtain the normalized embedding matrices of two views, \ie $\mathbf{Z}^{a} = f( \mathbf{X}^{a}, \mathbf{A}^{a})$ and $\mathbf{Z}^{b} = f( \mathbf{X}^{b}, \mathbf{A}^{b})$.

\subsection{Objective function}
\label{sec:obj}
\revision{To optimize the model, contrastive-based SSL methods \cite{DGI,hassani2020contrastive} apply contrastive loss on positive pairs and negative pairs, which is not suitable with limited number of samples and small number of classes. 
Reconstruction-based SSL methods~\cite{he2021masked} apply reconstruction loss (\ie MSE) to reconstruct input context, which would be impractical due to fitting the low signal-to-noise ratio fMRI features may result in overfitting to spurious features.
Therefore, we propose \ours{} to avoid sampling negative samples or reconstructing fMRI time series. Instead, we design novel augmentation methods together with the corresponding similarity loss to leverage unlabeled data, while prevent feature collapse in the embedding space.}
Once the embedding matrices of two views are computed,  the next step is to maximize the correlation of these two embedding matrices, as the CCA \revision{does}\revisionfoot{R2-Q4} (see Sec. \ref{sec:theor}).  Furthermore, we define an input-consistency regularization loss as
\begin{align}
\label{eq:loss}
\cL  =  -\frac{1}{N} \sum_{i=1}^{N} \frac{\left \langle \mathbf{z}^{a}_{i},\mathbf{z}^{b}_{i} \right \rangle }{\left \| \mathbf{z}^{a}_{i} \right \| \left \| \mathbf{z}^{b}_{i} \right \| } 
+  \gamma \sum_{v=a,b} \|(\mathbf{Z}^{v})^\top  \mathbf{Z}^{v} - \mathbf{I}\|_F^2,
\end{align}
where $\left \langle \cdot,\cdot \right \rangle$ is the dot product operator, and $\gamma$ is the trade-off coefficient.
\revision{ And $v$ is one of the view (\eg view $a$ and view $b$) and $Z^{v}$ is the embeddings matrices of this view.}\revisionfoot{R2-Q5}
In Eq. (\ref{eq:loss}), the first term can be regarded as a regularization of the embedded features, ensuring that the low-dimensional features can maintain the representation capability. The second term in Eq. (\ref{eq:loss}) ensures that the individual dimensions of the feature are uncorrelated to avoid collapsed solutions, \ie all outputs of the model are equal.

\textbf{Performing downstream task}. \revision{After the GCN encoder is trained, there is no need to keep the large graph structure in fine-tuning step for versatile and efficiency in clinical practice because many GNN encoders (\eg GCN and GAT) belong to transductive learning which means the $\mathbf{A}$ needs to be reconstructed after new samples involved.}
Thus, we \textit{fine-tune} the encoder without graph information (\ie by replacing $\mathbf{A}$ with identity matrix $\mathbf{I}$), followed by a linear layer with ELU activation function (denoted as $\psi(\cdot): \mathcal{Z} \mapsto \bR$) on  a small portion of labeled data using cross-entropy loss for a particular clinical prediction task. All sliding windows are entered into the model during inference processes. To this end, we give the pseudo code of \ours{}.

\begin{algorithm}[t!]
\small
\caption{The pseudo code of our proposed \ours{}}
\label{algo1}
\textbf{Input}:  
BOLD signal segments for all the $N$ subject $\{\mathbf{S}_{i}^{1},\dots,\mathbf{S}_{i}^{M}\}_{i=1}^N$, where $m\in [M]$ indexes the sliding window;
random drop function $\mathcal{R}(\cdot)$;
graph generation function $\mathcal{P}(\cdot)$, data in fine-tuning step $\mathbf{\tilde{X}}$ with label $\mathbf{\tilde{Y}}$, and fine-tuning classifier $\psi(\cdot)$.
\\
\textbf{Output}: Graph neural network encoder: $f(\cdot)$

\begin{algorithmic}[1]

\State $\mathbf{S}^{a} \gets \mathbf{S}^{m};~~ \mathbf{S}^{b} \gets  \mathbf{S}^{m\pm1}$   \Comment{Random select one window}
\State {\it SSL training:}
\For{$ t \leftarrow 1,2, \cdots, \mathrm{Epochs}$} 
\State $\mathbf{X}^{a}, \mathbf{A}^{a} = \mathcal{P}(\mathbf{S}^{a});~~ \mathbf{X}^{b}, \mathbf{A}^{b} = \mathcal{P}(\mathbf{S}^{b})$
\If{$\mathrm{use~random~drop}$}
\State $\mathbf{X}^{a}, \mathbf{A}^{a} = \mathcal{R}(\mathbf{X}^{a}, \mathbf{A}^{a})$
\State $\mathbf{X}^{b}, \mathbf{A}^{b} = \mathcal{R}(\mathbf{X}^{b}, \mathbf{A}^{b})$
\EndIf
\Comment{Generate fMRI features and population graphs}
\State $\mathbf{Z}^{a} = f( \mathbf{X}^{a}, \mathbf{A}^{a})$  \Comment{Obtain embeddings of view $(a)$}
\State $\mathbf{Z}^{b} = f( \mathbf{X}^{b}, \mathbf{A}^{b})$  \Comment{Obtain embeddings of view $(b)$} 
\State Loss $\gets $ Eq. (\ref{eq:loss})  \Comment{Objective function}
\State Updating $f(\cdot)$ with optimizer, \ie AdamW
\EndFor
\State {\it Fine-tuning for downstream task:}
\For{$ t \leftarrow 1,2, \cdots, \mathrm{Epochs}$} 
\State $\mathbf{P} = \psi(f(\hat{\mathbf{X}}, \mathbf{I}))$
\State Loss $\gets $ $\mathcal{L}_{ce}$
\Comment{Train classifier using cross-entropy loss}
\State Updating $\psi(\cdot)$ and $f(\cdot)$ with optimizer, \ie AdamW
\EndFor
\end{algorithmic}
\end{algorithm}

\section{Theoretical motivation and analysis on CCA Loss}
\label{sec:theor}

Machine learning models may poorly generalize on testing data if the decision making is learned from the reliance on the spurious features. Adding input-consistency regularization can help mitigate the correlation between spurious features with the label and improve testing performance~\cite{chen2020self,wei2020theoretical}. CCA, matching the similarity of two representations, has been widely used for multi-view fusion and fMRI analysis~\cite{zhuang2020technical}, while leveraging CCA in SSL is under-explored.
Although \cite{zhang2021canonical} uses CCA as regulation, it ONLY implicitly connects view augmentation with Information Bottleneck~\cite{tishby2000information}. Here we improve the theoretical analysis by building the connection between a deep-learning based non-linear CCA and SSL. Moreover, we imply how they will affect the downstream task. 
The general objective of deep CCA is expressed as 
\begin{align}
\label{eq:gencca}
    \max_{f}\cL_{\rm CCA} \coloneqq \bE_{G^{a},G^{b}}[f(G^{a})^\top f(G^{b})], 
    \\{\text{ s.t. }} \Sigma_{f(G^{a}),f(G^{a})} = \Sigma_{f(G^{b}),f(G^{b})}= \mathbf{I}, \nonumber
\end{align}
where 
$f$ is a normalized non-linear embedding: $\mathcal{G} \mapsto \mathcal{Z}$ and $f \gets \frac{f}{\|f\|_2}$ w.r.t. two views $G^{a}$ and $G^{b}$,  $\Sigma$ is the covariance matrix that $\Sigma_{f,f}=\bE_{G^{a}}[f(G^{a})f(G^{a})]$. Here, we state the identifiability between input consistent SSL regulation and non-linear CCA optimization.
\begin{theorem}[CCA loss]\label{th:identical}
Considering the optimization of~Eq.~\eqref{eq:loss} and  Eq.~\eqref{eq:gencca}, Eq.~\eqref{eq:loss} is the dual problem of Eq.~\eqref{eq:gencca} in the form of Lagrangian and satisfies Karush-Kuhn-Tucker (KKT) conditions~\cite{ghojogh2021kkt}.
\end{theorem}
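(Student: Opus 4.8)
The plan is to reduce the population-level constrained problem Eq.~\eqref{eq:gencca} to its empirical, finite-sample form, attach Lagrange multipliers to the two whitening constraints, and then identify the resulting stationarity system with the unconstrained objective Eq.~\eqref{eq:loss}. First I would replace the expectation $\bE_{G^{a},G^{b}}[f(G^{a})^\top f(G^{b})]$ by its Monte-Carlo estimate over the $N$ subjects; since $f$ is explicitly normalized ($f \gets f/\|f\|_2$), each per-subject term $f(G^{a})^\top f(G^{b})$ becomes the cosine similarity $\langle \mathbf{z}^{a}_i, \mathbf{z}^{b}_i\rangle / (\|\mathbf{z}^{a}_i\|\,\|\mathbf{z}^{b}_i\|)$, so maximizing $\cL_{\rm CCA}$ is equivalent to minimizing the first term of Eq.~\eqref{eq:loss}. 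In the same way the population covariances $\Sigma_{f(G^{v}),f(G^{v})}$ are estimated by $(\mathbf{Z}^{v})^\top \mathbf{Z}^{v}$, so the two equality constraints read $(\mathbf{Z}^{v})^\top \mathbf{Z}^{v} = \mathbf{I}$ for $v \in \{a,b\}$.

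Next I would form the Lagrangian of this equality-constrained maximization, introducing a symmetric multiplier matrix $\Lambda^{v}$ for each constraint:
\begin{align}
\label{eq:lag}
\mathcal{J}(f,\Lambda^{a},\Lambda^{b}) = \bE_{G^{a},G^{b}}\!\big[f(G^{a})^\top f(G^{b})\big] - \sum_{v=a,b} \big\langle \Lambda^{v},\, (\mathbf{Z}^{v})^\top \mathbf{Z}^{v} - \mathbf{I}\big\rangle .
\end{align}
Flipping the sign to turn the maximization into the minimization used in training, the correlation term becomes exactly the first summand of Eq.~\eqref{eq:loss}. The crucial step is to identify the multiplier with the constraint residual itself, $\Lambda^{v} = \gamma\big((\mathbf{Z}^{v})^\top \mathbf{Z}^{v} - \mathbf{I}\big)$; substituting this choice collapses each inner-product penalty into $\gamma\|(\mathbf{Z}^{v})^\top\mathbf{Z}^{v}-\mathbf{I}\|_F^2$, which is precisely the second (decorrelation) term of Eq.~\eqref{eq:loss}. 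Thus Eq.~\eqref{eq:loss} is recovered as the (augmented) Lagrangian of Eq.~\eqref{eq:gencca} with this data-adaptive multiplier, establishing the claimed dual relationship.

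It remains to check the KKT system. Dual feasibility is automatic because both constraints are equalities, so $\Lambda^{a},\Lambda^{b}$ are sign-unrestricted; complementary slackness holds trivially since the product $\langle \Lambda^{v}, (\mathbf{Z}^{v})^\top\mathbf{Z}^{v}-\mathbf{I}\rangle$ vanishes whenever the constraint is met. Stationarity is obtained by differentiating Eq.~\eqref{eq:lag} in $f$ (through the shared GCN parameters $\mathbf{\Theta}^{(l)}$) and setting the gradient to zero; I would verify that this is the same first-order condition one gets by directly differentiating Eq.~\eqref{eq:loss}, which follows once the multiplier identification above is inserted. Primal feasibility, i.e.\ $(\mathbf{Z}^{v})^\top\mathbf{Z}^{v}=\mathbf{I}$, is enforced by the decorrelation penalty at the minimizer.

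The main obstacle is the mismatch between a genuine Lagrangian term, which is linear in the constraint residual, and the quadratic Frobenius penalty appearing in Eq.~\eqref{eq:loss}. A textbook multiplier is a free variable decoupled from the residual, whereas here the penalty is quadratic, so the honest statement is that Eq.~\eqref{eq:loss} is an augmented/penalty Lagrangian in which the multiplier has been tied to $\gamma\big((\mathbf{Z}^{v})^\top\mathbf{Z}^{v}-\mathbf{I}\big)$. Consequently primal feasibility and exact complementary slackness hold only in the limit $\gamma \to \infty$ (or, for finite $\gamma$, in the approximate sense standard for the penalty method, with the residual shrinking as $\gamma$ grows). Making this limiting/approximate statement precise --- and arguing that the KKT stationary point of the penalized problem converges to that of the constrained problem Eq.~\eqref{eq:gencca} --- is the delicate part I would spend the most care on, since it is what legitimizes calling Eq.~\eqref{eq:loss} a Lagrangian dual rather than merely a soft-constraint surrogate.
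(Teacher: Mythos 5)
The paper offers no proof of Theorem~\ref{th:identical}: the statement is asserted with a citation to a KKT tutorial, and the one proof written out in Sec.~\ref{sec:theor} belongs to Theorem~\ref{th:main}. There is therefore nothing to compare your argument against line by line; what can be judged is whether your reconstruction is the natural one and whether it actually delivers the claim as stated.

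Your reconstruction is the natural (and almost certainly the intended) one: pass to the empirical form, use the normalization $f \gets f/\|f\|_2$ to turn the CCA objective into the cosine-similarity term of Eq.~\eqref{eq:loss}, and attach symmetric multipliers to the two whitening constraints. The decisive observation is the one you make in your final paragraph, and it is correct: the second term of Eq.~\eqref{eq:loss} is \emph{quadratic} in the constraint residual, whereas a Lagrangian term is linear in it with a free multiplier. Tying $\Lambda^{v}$ to $\gamma\bigl((\mathbf{Z}^{v})^\top\mathbf{Z}^{v}-\mathbf{I}\bigr)$ (up to a factor of $2$ needed to make the stationarity gradients actually agree) converts Eq.~\eqref{eq:loss} into a quadratic-penalty or augmented-Lagrangian surrogate, not a Lagrangian dual: primal feasibility, and hence the full KKT system of Eq.~\eqref{eq:gencca}, holds only in the limit $\gamma\to\infty$ or approximately for finite $\gamma$, and for the fixed $\gamma=0.2$ used in the experiments the minimizer of Eq.~\eqref{eq:loss} need not be a KKT point of the constrained problem. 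So the theorem cannot be proved literally as written; your proposal does the honest thing by establishing the penalty-method version and isolating exactly the limiting argument that would be required to upgrade it. Since the paper supplies neither that argument nor any other, your write-up is, if anything, more complete than the source; the one concrete improvement would be to state the factor-of-$2$ multiplier identification explicitly and to phrase the conclusion as ``Eq.~\eqref{eq:loss} is a quadratic-penalty relaxation whose stationary points converge to KKT points of Eq.~\eqref{eq:gencca} as $\gamma\to\infty$,'' which is the defensible form of the claim.
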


Then, we build the connection between CCA regularization and bounding the generalization error on the downstream tasks.  Let variables $G^a$ and $G^b$ be two views of a data, \ie $G^a$ is one sample and $G^b$ is its variant. Denote the representation operation $\T$, low rank approximation operator  $\R$ and $h: \mathcal{X} \mapsto \bR$. Conduct SVD on $\T$: find $k$ orthonormal vectors $U = [u_1, \dots, u_k]$, $V = [v_1, \dots, v_k]$ and scalars $s = \sigma_1, \dots, \sigma_k \in \bR$ that minimizes:
\begin{align*}
\cL_{U,V,S} \coloneqq & \max_{\|h\|_{L^2(G^b)}=1}\|\T h - \T_kh\|_{L^2(G^a)}, \\
\T_k h \coloneqq & \sum_{i=1}^k \sigma_i \langle v_i,h \rangle _{L^2(G^b)}u_i = f^\top \bw,
\end{align*}
where $ \bw = \langle v_i,h \rangle$. We state the general theorem of non-linear CCA on the approximation error as follows. 
\begin{lemma}[General theorem for non-linear CCA~\cite{lee2021predicting}] \label{lemma:2} Let $f$ be the solution of Eq~\eqref{eq:gencca} and $h^*$ be the optimal function to predict $\mathbf{Y}$, the one-hot encoder of label $Y$. Denote $\sigma_i \coloneqq \bE_{G^a,G^b}[f_i(G^a)f_i(G^b)].$ The the approximation error of $f$ satisfies
\begin{align}
    \label{eq:upper}
    e_{apx}(f)& \coloneqq \min_{\bW} \bE_{G^a}[\|h^*(G^a) - \bW^\top f(G^a)\|^2] \nonumber \\ 
&\leq  2\bE_{y}[\|h^* - \R  w_{b,y}\|^2 + \|(\R - \T_k) w_{b,y} \|^2].
\end{align}
Here $(\T_k\circ h_y)(g_a) \coloneqq \sum_{i=1}^k \sigma_i \bE[f_i(G^b)h_y(G^b)]f(g_a)$, and $(\R \circ h_y )(g_a)\coloneqq \bE_{Y}[\bE_{G^b}[h_y(G^b)|Y]|G^a = g_a]$, where $h_y$ satisfies $\bE[h_y(G^b)|Y=y]=\mathbf{1}(Y=y)$.   
\end{lemma}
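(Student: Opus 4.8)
The plan is to view $e_{apx}(f)$ as a constrained least-squares problem, bound it from above by exhibiting one feasible linear map $\bW$, and then split the resulting residual with the triangle inequality. The first ingredient I would establish is the spectral reading of the CCA optimum: solving Eq.~\eqref{eq:gencca} under the whitening constraints $\Sigma_{f(G^a),f(G^a)}=\Sigma_{f(G^b),f(G^b)}=\mathbf{I}$ is equivalent to extracting the top-$k$ singular system of the conditional-expectation operator $\T\colon L^2(G^b)\mapsto L^2(G^a)$, $(\T h)(g_a)=\bE[h(G^b)\mid G^a=g_a]$. Because the encoder is weight-shared across the two views, the coordinates $f_1,\dots,f_k$ simultaneously play the role of the leading left singular functions $u_i$ (evaluated on $G^a$) and right singular functions $v_i$ (evaluated on $G^b$), and $\sigma_i=\bE_{G^a,G^b}[f_i(G^a)f_i(G^b)]$ are exactly the associated singular values. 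Under this identification $\T_k h=\sum_{i=1}^k\sigma_i\langle v_i,h\rangle_{L^2(G^b)}u_i$ is the best rank-$k$ approximation of $\T$, and for each class function $w_{b,y}$ the image $\T_k w_{b,y}$ lies in $\mathrm{span}\{f_1,\dots,f_k\}$.

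This last point is what makes the bound mechanical: since $\T_k w_{b,y}=\bW^\top f$ for the explicit coefficient matrix whose columns are $\sigma_i\langle v_i,w_{b,y}\rangle$, this $\bW$ is feasible for the minimization defining $e_{apx}(f)$, and the minimum can only be smaller, giving $e_{apx}(f)\le\bE_y[\|h^*-\T_k w_{b,y}\|^2]$. The problem thus reduces to controlling this single residual for each label value. To do so I would insert $\pm\,\R w_{b,y}$ and decompose $h^*-\T_k w_{b,y}=(h^*-\R w_{b,y})+(\R-\T_k)w_{b,y}$, where $\R$ is the label-factored predictor $(\R h_y)(g_a)=\bE_Y[\bE_{G^b}[h_y(G^b)\mid Y]\mid G^a=g_a]$. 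This operator coincides with the Bayes predictor $h^*$ when the two views are conditionally independent given $Y$, so the first summand measures the slack in the (approximate) redundancy assumption, while the second measures the error incurred by truncating $\R$ to rank $k$. Applying $\|a+b\|^2\le 2\|a\|^2+2\|b\|^2$ termwise and averaging over $y$ produces exactly the claimed inequality, the factor $2$ coming directly from this elementary bound.

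The step I expect to be the main obstacle is the spectral identification in the first paragraph, carried out rigorously in the infinite-dimensional $L^2$ setting. I would need to argue that $\T$ is compact --- e.g.\ Hilbert--Schmidt under a finite-second-moment or density assumption on the joint law of $(G^a,G^b)$ --- so that a discrete singular system exists; verify that the whitening constraints of Eq.~\eqref{eq:gencca} actually drive the optimizer onto the leading singular subspace rather than some arbitrary invariant subspace; and confirm that $\T_k w_{b,y}$ is genuinely realizable as a finite linear functional of $f$, not merely approximable by one. Theorem~\ref{th:identical} supplies the complementary fact I rely on, namely that optimizing Eq.~\eqref{eq:loss} does recover the CCA solution of Eq.~\eqref{eq:gencca}, so the $f$ entering the bound is exactly the representation \ours{} learns. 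Once the spectrum is pinned down, the feasibility of the chosen $\bW$, the add-and-subtract split, and the $\|a+b\|^2$ inequality are all routine.
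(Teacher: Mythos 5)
The paper offers no proof of this lemma: it is imported verbatim from Lee et al.~\cite{lee2021predicting} and invoked as a black box inside the proof of Theorem~\ref{th:main} (at the step ``Notice that we need to upper bound Eq.~\eqref{eq:upper}''), so there is no in-paper argument to compare yours against. Your argument --- exhibit the feasible $\bW$ whose columns realize $\T_k w_{b,y}$ as a linear functional of $f$, deduce $e_{apx}(f)\le\bE_y\left[\|h^*-\T_k w_{b,y}\|^2\right]$, then insert $\pm\,\R w_{b,y}$ and apply $\|a+b\|^2\le 2\|a\|^2+2\|b\|^2$ --- is correct and is essentially the proof given in the cited reference. One remark: the spectral identification you flag as the main obstacle is not actually needed for the lemma as stated here, because the statement already defines $\T_k$ through the learned $f$ (via $\T_k h=f^\top\bw$ with $\sigma_i=\bE[f_i(G^a)f_i(G^b)]$), so feasibility of your chosen $\bW$ is definitional; the identification of $f$ with the top-$k$ singular system of $\T$, together with compactness of $\T$, only becomes necessary downstream in Theorem~\ref{th:main}, where $\lambda$ must be interpreted as the $k$-th singular value.
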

\begin{theorem}[Upper bound of excess risk of downstream task] \label{th:main} Let $G^{a}$ and $G^{b}$ be two views randomly generated from the same training instance and $\mathbf{Y}$ be the instance labels. Consider learning an embedding by minizing Eq.~\eqref{eq:loss} $f^*  \coloneqq \argmin \cL$ and perform downstream linear model on $\mathbf{Y}$ with $f(\cdot)$, i.e., $h(G)\coloneqq (\bW^*)^\top f(G)$, $\bW^* \gets \argmin_{\bW}\bE_{G,\mathbf{Y}}\left[\| \mathbf{Y} -\bW^\top f(G) \|^2 \right]$, for analysis simplicity $\bW \in \bR^{k\times k}$. Suppose $\mathbf{Y} = h^*(G)+N$, where $N$ is $\sigma^2$-subgaussian and $\bE[N]=0$. 
If the multi-view conditional independence assumption holds,
with probability of at least 1- $\delta$, we have the excess risk:
\begin{align}
\label{eq:risk}
    {\rm ER}_{f}(\bW) \coloneqq & \bE_{G}\left[ \|h^*(G) - \bW^\top f(G) \|_2^2 \right] \nonumber \\
    \leq & \mathcal{O} \left( \frac{\alpha}{1-\lambda} + \sigma^2 \frac{k}{n} \right),
\end{align}
where $n$ is the number of labeled samples in the downstream task, $k$ is the number of classes in $\mathbf{Y}$, $\alpha$ is the Bayes error, and $\lambda$ is the $k$-th singular value of 
the representation operation $\T$. In our context of SSL $(\T h) \coloneqq \bE[(h_y(G^{b})^\top f(G^{b})|f(G^{a})]$.

\end{theorem}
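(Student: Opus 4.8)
The plan is to control the excess risk by the standard approximation--estimation decomposition and then bound each piece separately, invoking Lemma~\ref{lemma:2} for the approximation part and a concentration argument for least squares with subgaussian noise for the estimation part. Write $\bar\bW \coloneqq \argmin_{\bW}\bE_G[\|h^*(G) - \bW^\top f(G)\|_2^2]$ for the population-optimal linear head and $\bW^*$ for its finite-sample counterpart fit on the $n$ labeled points. Since $N$ has mean zero and $\bar\bW$ satisfies the normal equations $\bE_G[f(G)(h^*(G)-\bar\bW^\top f(G))^\top]=0$, the residual is $L^2(G)$-orthogonal to the column space of $f$, so the decomposition is exact rather than merely an inequality:
\begin{align*}
{\rm ER}_{f}(\bW^*) = \underbrace{\bE_G[\|h^*(G) - \bar\bW^\top f(G)\|_2^2]}_{e_{apx}(f)} + \bE_G[\|(\bar\bW - \bW^*)^\top f(G)\|_2^2].
\end{align*}
The first term is precisely the approximation error $e_{apx}(f)$ of Lemma~\ref{lemma:2}; the second is the statistical error of estimating the linear head from finite data.

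For the approximation term I would feed $f$ (the minimizer of Eq.~\eqref{eq:loss}, which by Theorem~\ref{th:identical} solves the CCA program Eq.~\eqref{eq:gencca}) into Lemma~\ref{lemma:2} to obtain $e_{apx}(f) \le 2\bE_y[\|h^* - \R w_{b,y}\|^2 + \|(\R - \T_k)w_{b,y}\|^2]$. The first summand is handled by the multi-view conditional independence assumption: when $G^a$ and $G^b$ are independent given $Y$, the conditional-expectation operator $\R$ reproduces the Bayes-optimal predictor, so $\bE_y[\|h^* - \R w_{b,y}\|^2]$ collapses to a constant multiple of the Bayes error $\alpha$. The second summand is the residual of truncating the representation operator $\T$ to its leading $k$ singular directions; bounding it via the spectral tail and the gap at the $k$-th singular value $\lambda$ produces the $\tfrac{1}{1-\lambda}$ factor, giving an approximation bound of order $\tfrac{\alpha}{1-\lambda}$.

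For the estimation term I would condition on the pretrained (hence fixed) feature map $f$, so that fitting $\bW$ reduces to an ordinary least-squares problem with $k$-dimensional inputs, $k$-dimensional targets, and $\sigma^2$-subgaussian noise $N$. A standard high-probability bound on the excess risk of least squares then yields $\bE_G[\|(\bar\bW - \bW^*)^\top f(G)\|_2^2] = \mathcal{O}(\sigma^2 k/n)$ with probability at least $1-\delta$, following \cite{lee2021predicting}; here the normalization $\Sigma_{f,f}=\mathbf{I}$ enforced in Eq.~\eqref{eq:gencca} keeps the population feature covariance identity-conditioned, so the design is non-degenerate and the inversion step is controlled. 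Summing the two bounds and absorbing constants gives the claimed $\mathcal{O}\!\left(\tfrac{\alpha}{1-\lambda} + \sigma^2 \tfrac{k}{n}\right)$.

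The main obstacle I anticipate is the approximation step, specifically translating the abstract operator residual $\|(\R - \T_k)w_{b,y}\|^2$ from Lemma~\ref{lemma:2} into the interpretable $\tfrac{\alpha}{1-\lambda}$ form. This requires using the conditional-independence structure to show both that $\R$ is faithfully captured by the rank-$k$ truncation $\T_k$ up to a factor governed by the spectral gap at $\lambda$, and that the uncaptured mass is proportional to the Bayes error rather than to some uncontrolled quantity. Verifying that these two effects compose into the multiplicative $\tfrac{\alpha}{1-\lambda}$ — instead of leaving an additive residual outside the spectral factor — is where the delicate bookkeeping lies; a subtler point is ensuring the finite-sample $\bW^*$ remains close enough to $\bar\bW$ that the orthogonality-based split above is not disturbed, which is why the argument is phrased as a high-probability statement.
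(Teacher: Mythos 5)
Your plan follows essentially the same route as the paper's proof: both invoke Lemma~\ref{lemma:2} for the approximation error, use the multi-view conditional independence assumption to identify $\T$ with $\R$ and relate the $k$-th singular value to the maximal correlation, bound the truncation residual $\|(\R-\T_k)w_{b,y}\|^2$ through the indicator functions $w_{i,y}$ and the spectral gap to obtain the $\mathcal{O}(\alpha/(1-\lambda))$ term, and apply a subgaussian least-squares concentration bound for the $\mathcal{O}(\sigma^2 k/n)$ estimation term. The ``delicate bookkeeping'' you flag at the end is precisely the chain of inequalities the paper carries out (lower-bounding $\sum_y\langle \T_k w_{b,y}, w_{a,y}\rangle$ by $1-2\alpha$ and propagating this through the triangle inequality), so your proposal is a correct outline of the paper's own argument.
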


\begin{proof}
 Let variables $G^a$ and $G^b$ be two views of a data, \ie $G^a$ is one sample and $G^b$ is its variant. 
Let functions $w_{i,y}(G^a)=\mathbf{1}(h^*(G^a)=y)$ for $i \in \{a,b\}$. 
Under the conditional independence, $\T = \R$ and the $(k-1)$-th maximal correlation of $G^a$ and $G^b$ is the $k$-th singular value of $\T$~\cite{makur2015efficient}.
We have
\begin{align*}
 & \sum_y \|\T_k w_{b,y} - w_{a,y} \|^2 \\ 
= & 1- \int_{G^a,G^b}P_{G^a,G^b}(G^a,G^b)1(h(G^a)  \neq   h(G^b)) \\ 
 \geq & \sum_{i\in\{a,b\}} P(h^*(G^i)\neq y) \geq 1 - 2\alpha.   
\end{align*}
Accordingly we obtain
\begin{align*}
& \sum_y\| \R w_{b,y} - w_{a,y}\|^2 \\ 
= & \sum_y (\|\R w_{b,y}\|^2 \! +  \| w_{a,y}\|^2 
-  2 \langle w_{a,y},\R w_{b,y} \rangle) \\
\leq & 2-2(1-2\alpha)= 4\alpha.
\end{align*}
Additionally,
\begin{align*}
     \sum_y \|(\T \! - \! \T_k)w_{b,y}\|^2  
   \! \leq \! \lambda^2(1 \! -\! \frac{(1-2\alpha)^2-\lambda^2}{1-\lambda^2}) 
   \! \leq \! \frac{4\alpha(1-\alpha)}{1 - \lambda^2}
\end{align*}
Hence we have 
\begin{align*}
    & {\sum_y\|\T_k w_{b,y} - w_{a,y}\|^2} \\
    \leq & \left( \sqrt{\sum_y \| \T_k w_{b,y} - w_{a,y}\|^2} + \sqrt{\sum_y\|(\T - \T_k)w_{b,y} \|^2} \right)^2  \\
    \leq  & (2\sqrt{a}+\sqrt{\frac{4\alpha(1-\alpha)}{1-\lambda^2}} )^2\leq \frac{16\alpha}{1-\lambda},
\end{align*}
and $\sum_y\|\T_k w_{b,y} -w_{a,y} \|^2 \leq \sum_y \|(\T - \T_k)w_{b,y}\|^2$.

Notice that we need to upper bound Eq.~\eqref{eq:upper}, where 
\begin{align*}
& \sum_{y} \left\|\left(\R-\mathcal{T}_{k}\right) w_{2, y}\right\|^{2} \\
\leq & 2\! \sum_{y} \! \left\|\R w_{2, y}\!-\!w_{1, y}\right\|^{2}\!+\!\left\|w_{1, y}\!-\!\mathcal{T}_{k}  w_{2, y}\right\|^{2}\!  \\
\leq & \frac{16 \alpha}{1-\lambda_{k}^{2}}+4 \alpha = \mathcal{O}(\frac{\alpha}{1-\lambda}).
\end{align*}
On the other hand, we have $\|h^* - \R  w_{b,y}\|^2
 \leq \langle (\bY - h^*, \bW^{*\top} f - \bW^\top f \rangle \lesssim Tr(\Sigma_{YY|G^a})(k+\log k/\delta))$. 
 Therefore, the excess risk of fitting the linear layer 
 \begin{align*}
\frac{1}{n}\|h^* - \bW^\top f\|^2_F\lesssim & \frac{1}{n} Tr(\Sigma_{YY|G^a})(k+\log k/\delta)) \\ 
= & \mathcal{O}(\sigma^2 \frac{k}{n})
\end{align*} 
\revisionR{Therefore we could easily conclude that Theorem~\ref{th:main} holds.} 
\end{proof}

\begin{remark} \revisionR{Note that based on bound provided by  Theorem~\ref{th:main},} larger number of labeled samples $n$ in the downstream task and smaller value of $\lambda$ give a lower excess risk. 
Small value of $\lambda$ can be achieved by low rank representation of $\T$, \eg via optimizing Eq.~\eqref{eq:gencca}~\cite{lee2021predicting}. 
\end{remark}

\section{Experiments}
\label{sec_experiments}

\subsection{Experimental setup}  

\begin{table*}[!ht]
\small
\centering
\caption{\revisionR{Performance (\ie mean (std))  of all methods on two datasets under $20\%$ labeled data in the fine-tuning step.} The ``gray'' row means the results with vanilla training (semi-supervised learning or supervised learning). 
\revision{
The ``Avg'' metric means the average value of the five evaluation metrics for convenient comparison.
}
The best performance is highlighted in boldface.}
\begin{threeparttable}
\begin{spacing}{1.22}
\begin{tabular}{|r|r|ccccc|c|}
\hline
~~&Metrics & Accuracy & AUC &   Precision & Recall & F1-score  & Avg\\ 
\hline
\multirow{10}{*}{\begin{rotate}{90}{\textbf{ABIDE}}\end{rotate}}
&Vanilla GCN \cite{parisot2018disease}&\cellcolor{mygray}59.6 (2.8) &\cellcolor{mygray}59.3 (2.7) &\cellcolor{mygray}61.0 (2.5) &\cellcolor{mygray}67.3 (5.3) &\cellcolor{mygray}64.7 (2.8) &\cellcolor{mygray}62.4 (3.2)\\ 
&GAT \cite{GAT}     &\cellcolor{mygray}61.6 (2.2)   &\cellcolor{mygray}60.8 (1.9)   &\cellcolor{mygray}62.7 (2.2)   &\cellcolor{mygray}69.7 (4.0)    &\cellcolor{mygray}66.6 (2.2)   &\cellcolor{mygray}64.3 (2.5) \\ 
&SAC-GCN \cite{song2021graph}&\cellcolor{mygray}61.8 (2.2)   &\cellcolor{mygray}61.2 (2.2) &\cellcolor{mygray}63.4 (2.3) &\cellcolor{mygray}68.6 (4.1) &\cellcolor{mygray}65.8 (2.4) &\cellcolor{mygray}64.1 (2.7)\\
&DGI \cite{DGI}           &57.4 (2.7)   &56.9 (2.4)   &59.8 (2.6)   &63.6 (5.3)   &61.6 (2.7) &59.9 (3.1) \\ 
&MVGRL \cite{hassani2020contrastive}         &58.1 (3.1)   &57.6 (2.8)   &60.5 (1.7)   &64.0 (5.2)   &62.2 (2.2) &60.5 (3.0) \\  
&BGRL \cite{thakoor2021bootstrapped}          &61.0 (2.4)   &60.3 (2.4)   &62.4 (2.0)   &69.3 (4.9)   &65.6 (2.6) &63.7 (2.9) \\  
&CCA-SSG \cite{zhang2021canonical}       &60.6 (2.3)   &60.1 (2.0)   &62.6 (2.6)   &66.5 (4.4)   &64.4 (4.9) &62.8 (3.6) \\  
& \ours  ({ours}) &\textbf{63.7} (1.8)   &\textbf{63.6} (2.1)   &\textbf{65.6} (1.5)   &\textbf{70.1} (3.7)   &\textbf{67.7} (2.5) &\textbf{66.2} (2.5) \\  
\hline
\multirow{9}{*}{\begin{rotate}{90}{\textbf{FTD}}\end{rotate}}
&Vanilla GCN \cite{parisot2018disease}&\cellcolor{mygray}64.5 (3.0) &\cellcolor{mygray}65.5 (2.6) &\cellcolor{mygray}65.9 (2.8) &\cellcolor{mygray}69.4 (3.6) &\cellcolor{mygray}66.8 (2.7) &\cellcolor{mygray}66.4 (2.5)\\ 
&GAT \cite{GAT}      &\cellcolor{mygray}66.4 (2.7)   &\cellcolor{mygray}66.1 (2.2) &\cellcolor{mygray}66.8 (2.2) &\cellcolor{mygray}75.7 (2.2) &\cellcolor{mygray}70.6 (2.2)    &\cellcolor{mygray}69.1 (2.3) \\ 
&SAC-GCN \cite{song2021graph}&\cellcolor{mygray}67.8  (3.8) &\cellcolor{mygray}67.5  (3.6) &\cellcolor{mygray}66.9  (2.7) &\cellcolor{mygray}71.8  (4.5) &\cellcolor{mygray}68.1  (2.9) &\cellcolor{mygray}68.5 (3.5)\\
&DGI \cite{DGI}           &62.8 (2.9)   &62.9 (2.4) &65.4 (2.8) &62.5 (7.3) &63.8 (2.0) &63.5 (3.5) \\ 
&MVGRL \cite{hassani2020contrastive}         &63.5 (3.8)   &63.4 (2.9) &65.7 (2.3) &64.0 (6.6) &64.7 (3.4) &64.3 (3.8) \\  
&BGRL \cite{thakoor2021bootstrapped}          &68.2 (2.5)   &68.5 (1.9) &67.8 (3.1) &73.6 (5.0) &70.6 (2.1) &69.7 (2.9) \\  
&CCA-SSG \cite{zhang2021canonical}       &69.7 (3.0)   &69.5 (2.4) &68.6 (2.7) &75.7 (4.5) &71.8 (2.4) &71.1 (3.0) \\   
& \ours{}  ({ours}) &\textbf{72.4} (2.5)   &\textbf{71.7} (2.0) &\textbf{70.7} (2.2) &\textbf{76.8} (4.2) &\textbf{73.3} (2.8) &\textbf{73.0} (2.7) \\ 
\hline
\end{tabular}
\end{spacing}
\end{threeparttable}
\label{tab:1}
\end{table*}

\subsubsection{Datasets} \label{sec_dataset} we conduct experiments on two fMRI datasets: Autism brain imaging data exchange (ABIDE) for health control (HC) vs. autism patient classification\footnote{http://fcon\_1000.projects.nitrc.org/indi/abide/.} and Frontotemporal dementia (FTD) for HC vs. dementia classification\footnote{https://cind.ucsf.edu/research/grants/frontotemporal-lobar-degeneration-neuroimaging-initiative-0.}.

\textbf{ABIDE} contains 1029 subjects with functional magnetic resonance imaging data from ABIDE-I and ABIDE-II datasets in this work, \revision{including 485 ASD patients and 544 healthy controls (HCs) which are nearly balanced class distribution.}\revisionfoot{R3-Q1} The registered fMRI volumes were constructed on predefined templates, using Bootstrap Analysis of Stable Cluster parcellation with 122 ROIs (BASC-122). We construct a $122 \times 122$ FC network for each subject, where each node is an ROI and the edge weight is the Pearson's correlation between the time series of BOLD signals of paired ROIs. To represent a subject, we use the upper triangle of the fully associated matrix, yielding in a 7503-dimensional feature vector.


\textbf{FTD} 
includes $181$ subjects \revision{consisting of 86 normal cases (HC), 95 characterized as FTD, which are nearly balanced class distribution.}\revisionfoot{R3-Q1}
Subjects are acquired on the 3.0T scanner at different centers with a gradient field strength of 80mT/m and gradient switching rate of 200mT/m/ms, using an eight-channel phased-array receiver coil.
Then, we use the DPARSF toolbox \cite{yan2010dparsf} to preprocess those data. 
The following pipeline is included: (i) slice timing correction; (ii) head motion correction; (iii) spatial normalization to the montreal neurological institute template; (iv) spatial smoothing using a full half-width Gaussian smoothing kernel; and (v) linear detrending and temporal bandpass filtering (0.01- 0.10 Hz) for BOLD signals. Finally, the registered fMRI volumes were parcellated into 116 ROIs according to the AAL template.

\revision{
\subsubsection{Graph construction} \label{sec_graph_construction}
We follow Parisot \etal \cite{parisot2018disease} to obtain the initial graph $\mathbf{A}$. 
To begin with, we extract low-dimensional and discriminative features from raw medical images. We then utilise them to build a similarity graph $\mathbf{S} \in \mathbb{R} ^{n \times n}$ where $n$ indicates the number of nodes in the population graph, intending to limit the adverse influence of high-dimensional features, such as noisy, redundant features and the dimensionality curse.
Additionally, we employ phenotype data (\eg sex, age and gene) to calculate node similarity from another angle, supplying much information to produce high-quality graphs. 
Finally, we merge the edges acquired from image-based node features and the edges generated from phenotype information to obtain the initial graph $\mathbf{A}$ by performing the Hadamard product between the similarity graph matrix $\mathbf{S}$ and the phenotypic graph matrix $\widetilde{\mathbf{S}}$ (\ie $\mathbf{A} =  \mathbf{S} \circ \widetilde{\mathbf{S}}$).
We also sparse the graph $\mathbf{A}$ by preserving the $k$ edges with the largest weights for each node and others as zeros. Equally, we add the diagonal matrix $\mathbf{I}$ to $\mathbf{A}$ (\ie $\mathbf{I} +  \mathbf{A} \rightarrow \mathbf{A}$).}
 \revisionfoot{R1}


\subsubsection{Comparison methods} \label{sec_comparison_methods}
The comparison methods include three GCN methods without SSL strategies (vanilla GCN \cite{parisot2018disease}, GAT \cite{GAT}, and SAC-GCN \cite{song2021graph}), two contrastive-based graph SSL methods (DGI \cite{DGI} and MVGRL \cite{hassani2020contrastive}), 
two similarity-based graph SSL method (BGRL \cite{thakoor2021bootstrapped} and CCA-SSG \cite{zhang2021canonical}). 
\revision{We adopt the open-source codes of all comparison methods and use grid search to decide their best hyper-parameters. For a fair comparison, we use the same dynamic window strategy for the comparison methods and use grid search technology to find their best practice.}\revisionfoot{R1-Q6}
We list details of each comparison method below.
\begin{itemize}
\item \textit{Vanilla GCN} \cite{parisot2018disease} trains GCN model \cite{kipf2016semi} with cross-entropy loss on fMRI data for disease diagnosis with labeled training data. This is the baseline method of GCNs on fMRI data~\cite{parisot2018disease}.

\item \textit{GAT} \cite{GAT} is one popular GCN method with attention mechanism and is considered as the state-of-the-art method for graph learning with labeled training data.

\item \revision{\textit{SAC-GCN} \cite{song2021graph} is a population graph based method which conduct representation learning by taking into account both the functional graph and the structural graph.}

\item \textit{DGI} \cite{DGI} is a pioneer in the study of SSL with the graph structure data. As a contrastive learning method, DGI obtains good representation by manually constructing positive and negative pairs.

\item \textit{MVGRL} \cite{hassani2020contrastive} maximizes the mutual information of multiple related views inspired by DGI \cite{DGI}, which contains a more complex pipeline for graph SSL.

\item \textit{BGRL} \cite{thakoor2021bootstrapped} generalizes BYOL \cite{grill2020bootstrap} to graphs  by forming online and
target node embeddings, and gets promising results on large-scale graph representation learning.

\item \textit{CCA-SSG} \cite{zhang2021canonical} proposes a similarity-based SSL and obtains significant results. The main difference between \ours{} and CCA-SSG is that \ours{} designs a special strategy to generate two views according to the characteristics of medical data, and we improve the theoretical analysis by building the connection between a deep-learning based non-linear CCA and SSL.

\end{itemize}
\revision{For supervised learning method (\eg vanilla GCN, GAT, and SAC-GCN), we follow their original setting (transductive learning) to train the model.}
Among the above comparison methods, DGI, MVGRL, BGRL, and CCA-SSG are the state-of-the-art GCN models using SSL embeddings.
\revision{In contrastive-based models (\ie DGI and MVGRL), following their original literature and official codes, we use the corruption function and the readout function for negative construction and positive construction. In similarity-based methods (\ie BGRL and CCA-SSG), we use the random masking function on feature and graph to generate two views.}

\subsubsection{Implementation details} \label{sec_implementation}
All experiments are run on a server with 8 NVIDIA GeForce 3090 GPUs and implemented in PyTorch (vision 1.9).
All parameters are optimized by the AdamW \cite{loshchilov2017decoupled} optimizer  with $1e^{-3}$ learning rate. 
The $\gamma$ in Eq. (\ref{eq:loss}) is empirically set to $0.2$, $L$ and $s$ in \textbf{\textit{S-A}} are set to $30$ and $15$ respectively, $l_a$ and $l_a$ in \textbf{\textit{M-A}} are randomly selected from $[10,20,30,40,50]$ in each training iteration.
\revision{
We apply one graph convolutional layer following one linear layer in this study considering that the large number of  graph convolutional layer may cause the problem of over-smooth\cite{yang2020revisiting} (the problem of over-smooth is easy to happen on the small datasets).
} \revisionfoot{R1-Q1}
We apply the ELU function as a nonlinear activation for each layer. 
\revision{The \textit{training times} ($s$) on the ABIDE dataset and FTD dataset are $20.3 (\pm 0.2)$ and $14.7 (\pm 0.2)$, respectively.} \revisionfoot{R1-Q5}
In the fine-tuning step, \revision{the labelled data is randomly selected from the original set (\eg $206$ labeled samples on the ABIDE dataset and $36$ labeled samples on the FTD dataset for 20\% labeled rate in the fine-tuning step), $5$-fold cross-validation is performed and the experiments are repeated $5$ times with random seeds.}
The average performances with corresponding standard deviation (std) are reported for all methods. 

\begin{figure}[!t]
\centering
\scalebox{0.85}{
\subfloat[ABIDE]{\scalebox{0.32}{\includegraphics{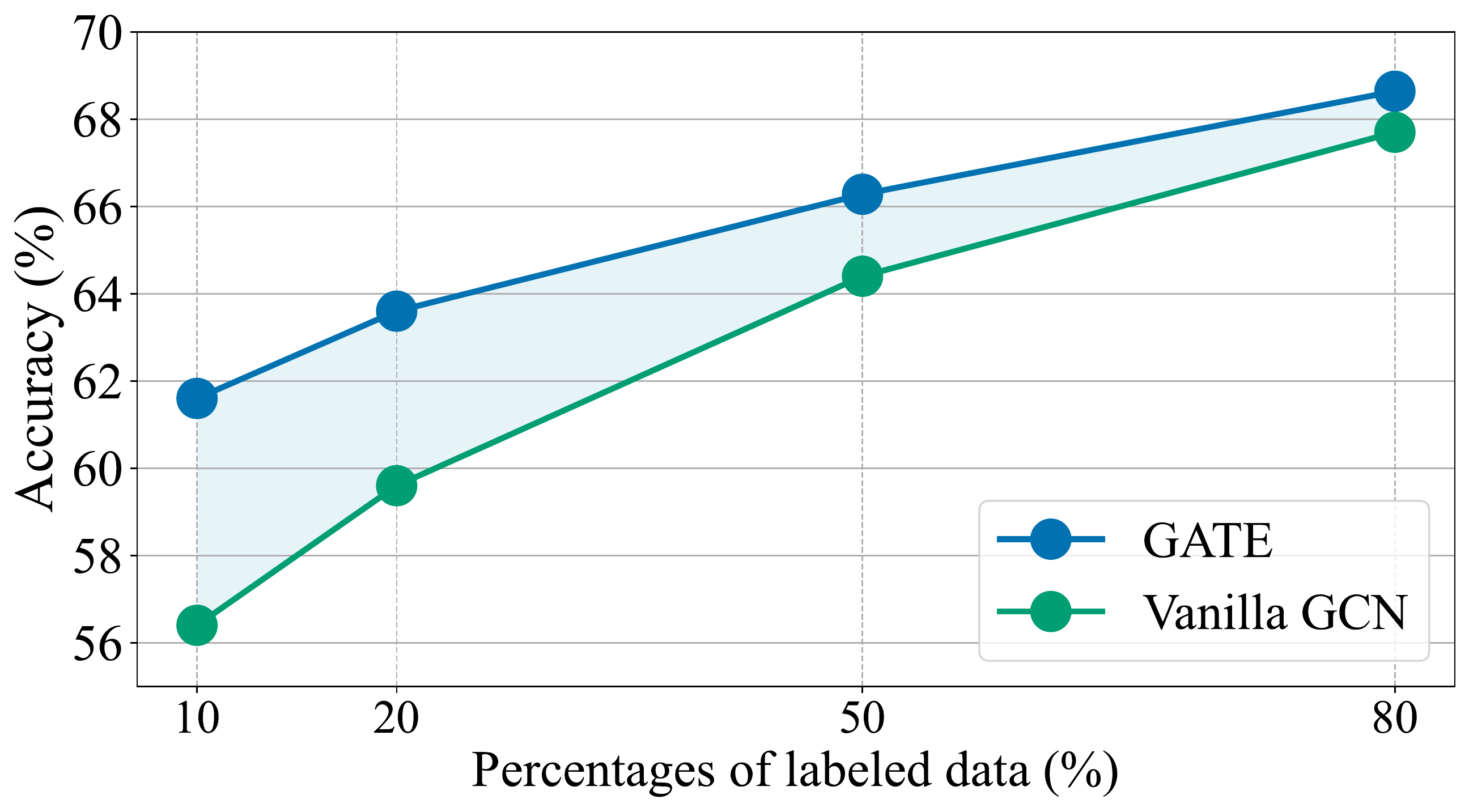}}}}\\
\scalebox{0.85}{
\subfloat[FTD]{\scalebox{0.32}{\includegraphics{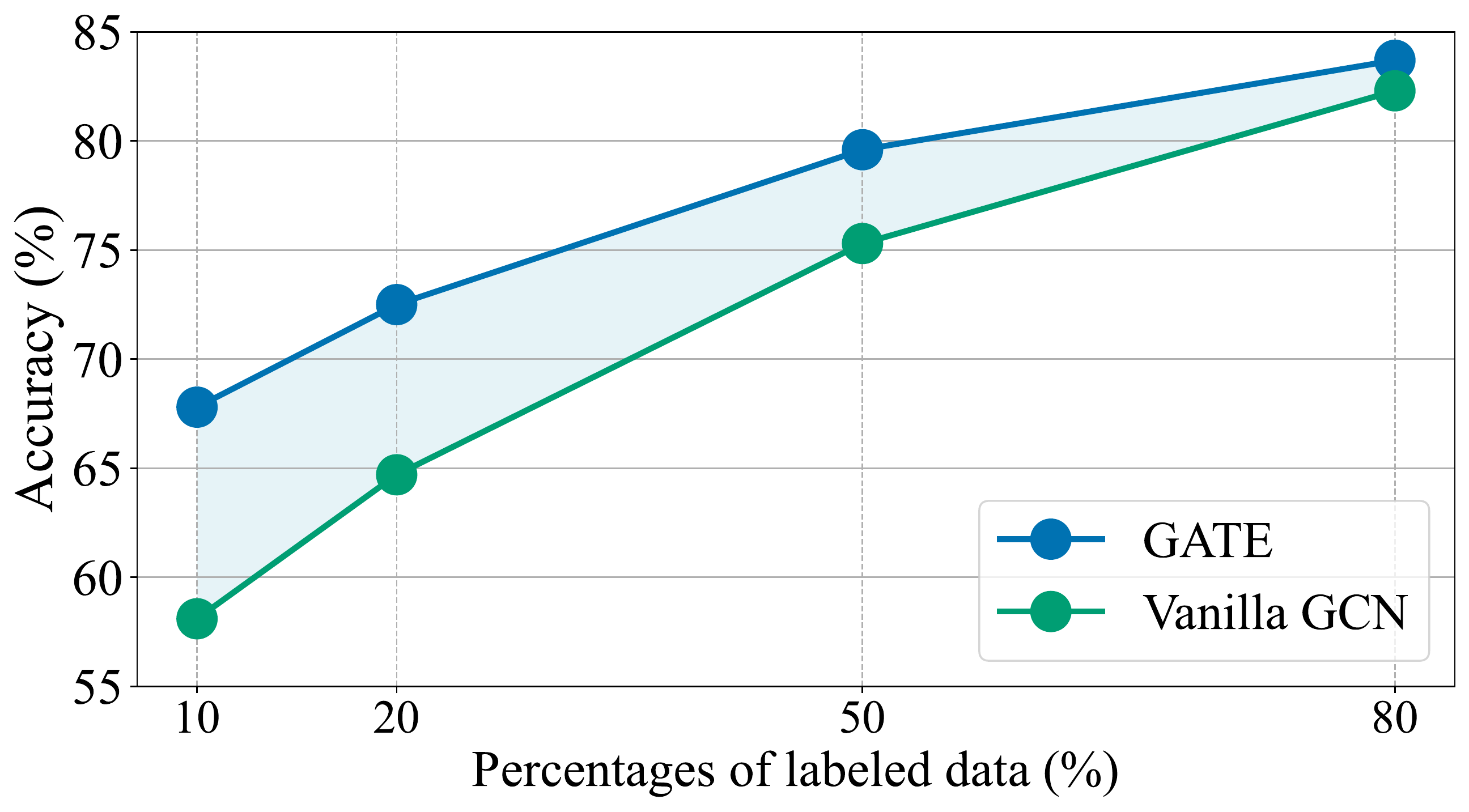}}}}
\caption{Accuracy of \ours{} and vanilla GCN at different label rates.}
\vspace{-2mm}
\label{fig2}
\end{figure}

\begin{figure}[!t]
\centering
\scalebox{0.85}{
\subfloat[ABIDE]{\scalebox{0.32}{\includegraphics{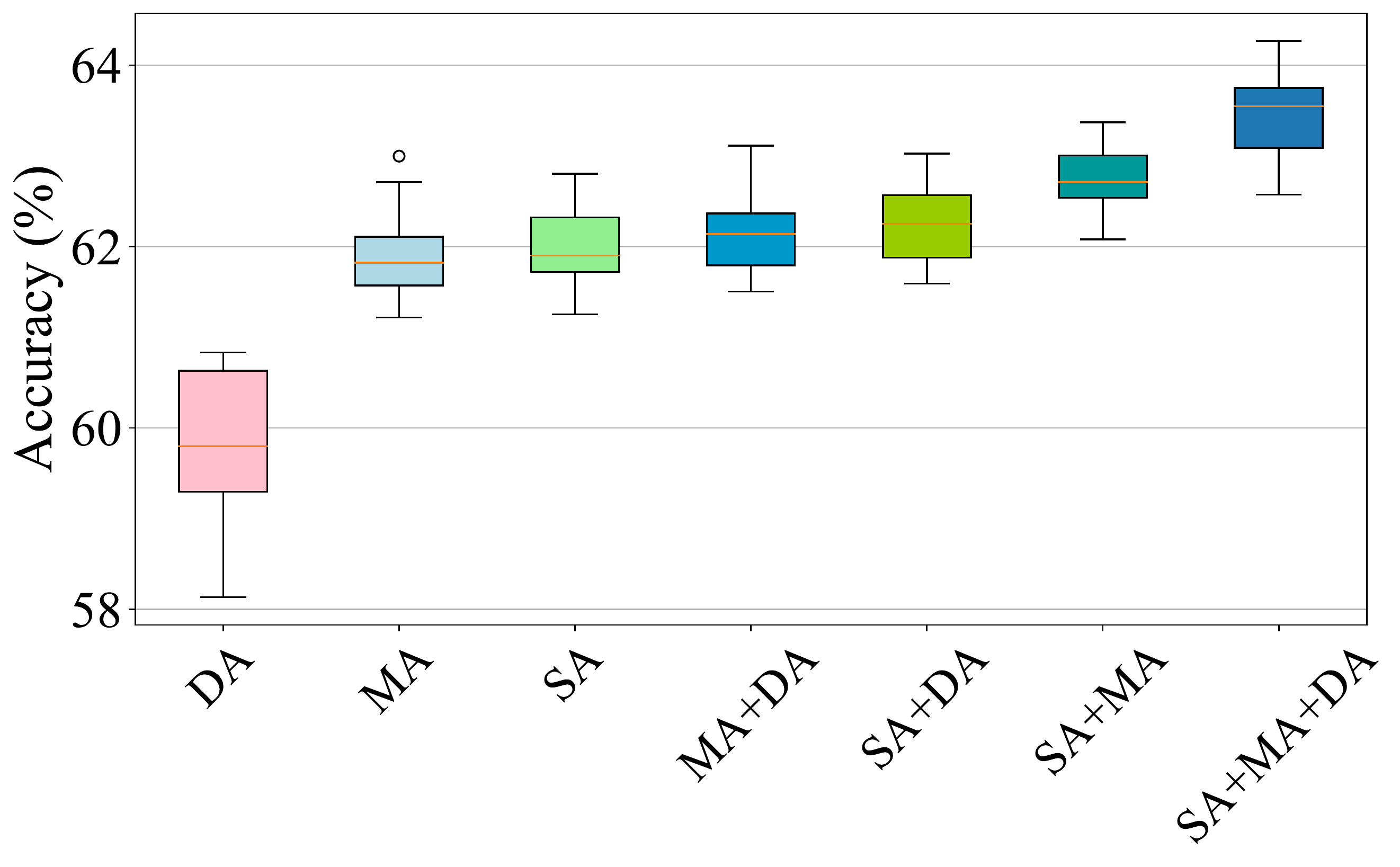}}}}\\
\scalebox{0.85}{
\subfloat[FTD]{\scalebox{0.32}{\includegraphics{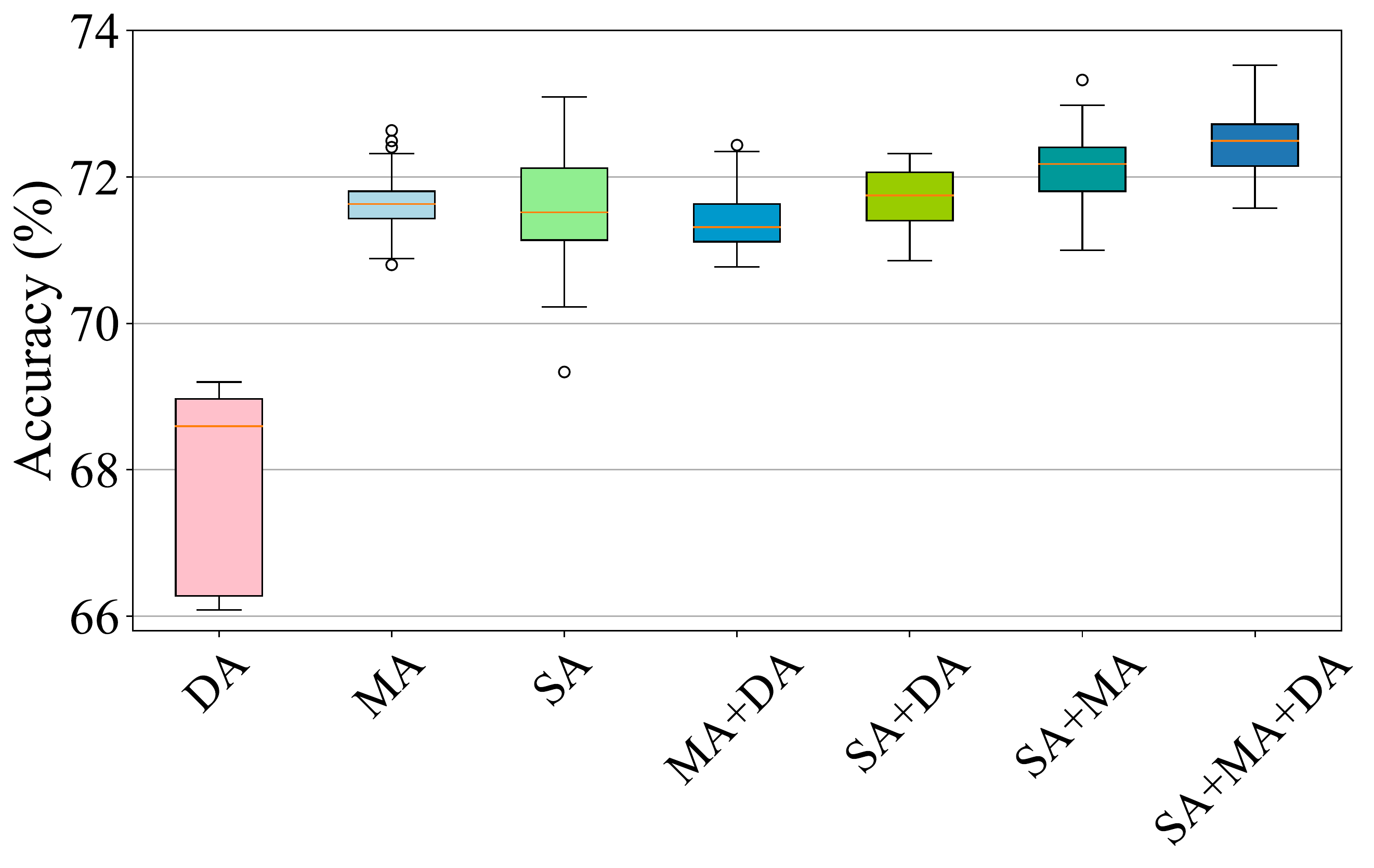}}}}
\caption{Performance of \ours{} with different combination of augmentation methods on two datasets.}
\label{fig4}
\end{figure}

\subsubsection{Performance evaluation}
The performance evaluation include five evaluation metrics, including \textit{Accuracy, Area under the ROC Curve (AUC), Precision, Recall and F1-score}. 
For all of these metrics, a higher value means better performance. 
Besides, we conduct significance testing using the two sample t-test.
\begin{figure}[!t]
\centering
\subfloat[ABIDE]{\scalebox{0.3}{\includegraphics{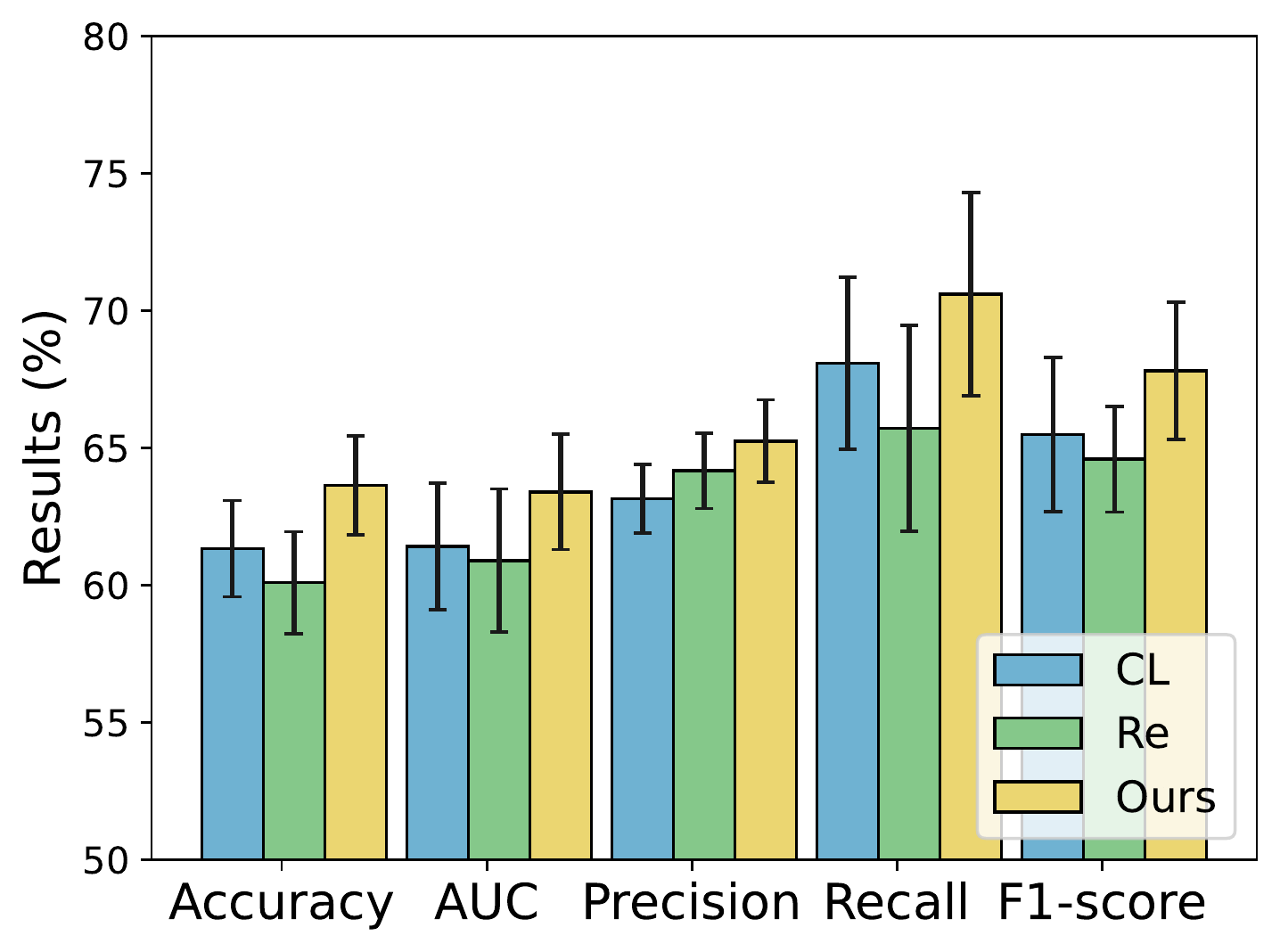}}}
\subfloat[FTD]{\scalebox{0.3}{\includegraphics{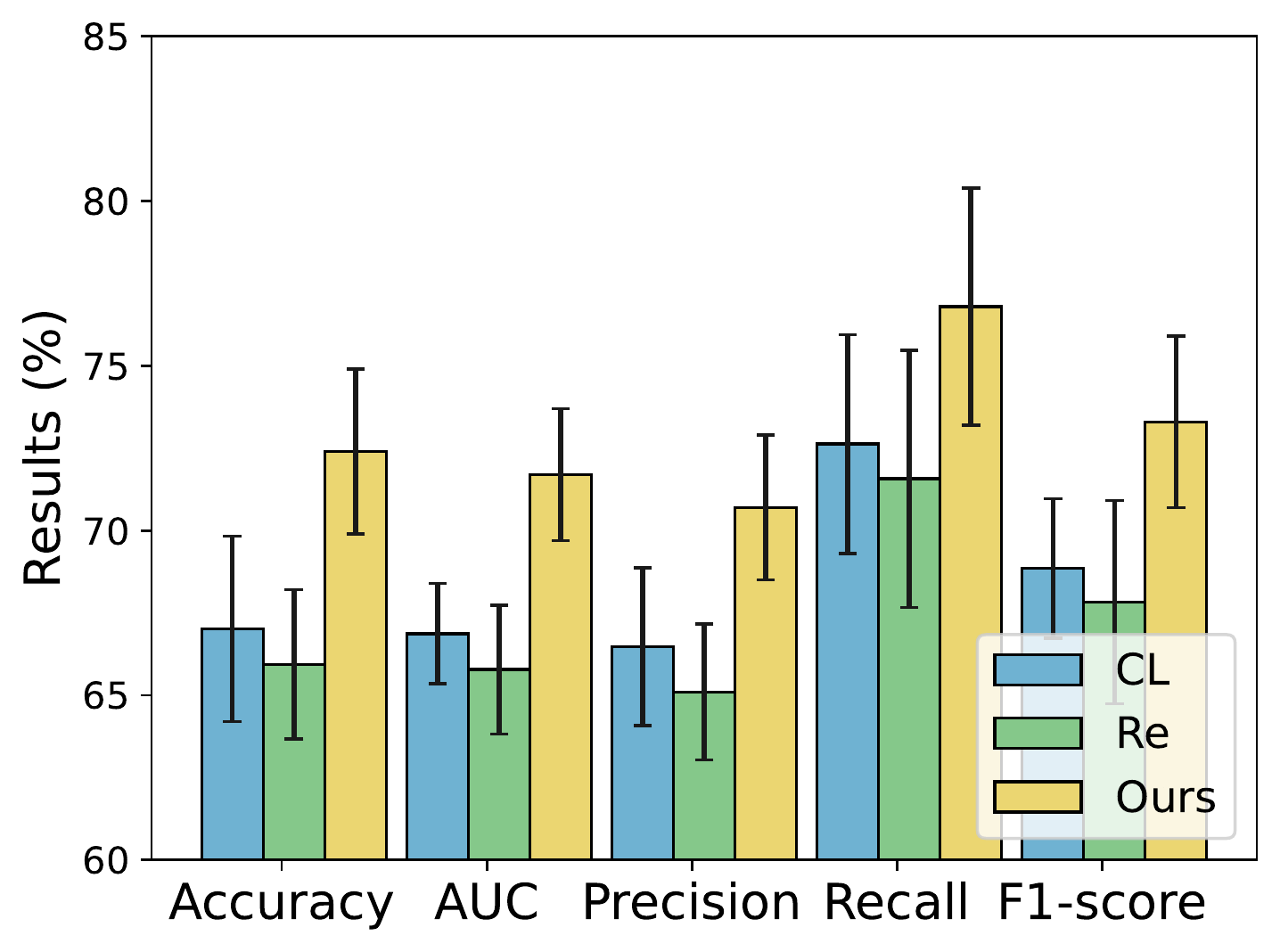}}}
\caption{Effectiveness of different SSL strategies, \ie `CL' : contrastive learning-based SSL and `Re': reconstruction-based SSL.}
\vspace{-1mm}
\label{fig_aug}
\end{figure}

\subsection{Results and analysis} 
\label{sec:results_and_analysis}
The quantitative comparison is shown in Table \ref{tab:1}.
We can observe that: \ours{} achieves the best performance on the quantitative metrics across two datasets, followed by CCA-SSG, SAC-GCN, GAT, BGRL, Vanilla GCN, MVGRL, and DGI. 
\revisionR{Meanwhile, we find that the improvements are significant (with $p<0.05$ via significance testing) compared with state-of-the-art graph SSL methods. Particularly, \ours{} improves by $12.73\%$ ($p=4.83e^{-4}$) and $4.04\%$ ($p=4.16e^{-3}$) on average}, compared to the baseline comparison method DGI and the best graph SSL comparison method CCA-SSG. This indicates the superior performance of our proposed \ours{}. 
Moreover, we can observed that the DGI and MVGRL cannot \revisionR{outperform} the baseline vanilla GCN. The possible reason is that the limited  number  of  samples  and  a  small  number  of  classes in the dataset (the problem of class collision \cite{saunshi2019theoretical}).
We emphasize that the ultimate success of  SSL is leveraging unlabeled data, which means SSL methods can gain more benefits and achieve higher results with more unlabeled data.

We further conduct experiments under different ratios of labeled data ($10\%$ to $80\%$). Remarkably, labeled data is only used in the fine-tuning stage under the SSL setting.
As shown in Fig. \ref{fig2}, \ours{} always outperforms the vanilla GCN,  especially with a large
gap in performance on limited labeled data. 
Specifically, compared to vanilla GCN, \ours{}
achieves an average improvement of  $11.4\%$, $9.9\%$, $4.3\%$, and $2.0\%$ in terms of  $10\%$, $20\%$, $50\%$, and $80\%$ percentage of labeled data, respectively. These results confirm our claim that \ours{}  obtains satisfying prediction performance with a small portion of labeled data.

\subsection{Ablation study} 
\label{sec:ablation}

\noindent \textbf{Effectiveness of  dynamic FC augmentation.}
To evaluate the effectiveness of our proposed augmentation strategy (Sec. \ref{sec_augmentation}), we report the performance of seven augmentation strategies based on our framework in Fig. \ref{fig4} (a), \ie DA (with random drop augmentation \cite{thakoor2021bootstrapped}),
MA (with \textbf{\textit{M-A}} augmentation),
SA (with \textbf{\textit{S-A}} augmentation),
MA$+$DA (with both \textbf{\textit{M-A}} and DA),
SA$+$DA (with both \textbf{\textit{S-A}} and DA), 
SA$+$MA (with both \textbf{\textit{S-A}} and \textbf{\textit{M-A}}), 
and SA$+$MA$+$DA (with \textbf{\textit{S-A}}, \textbf{\textit{M-A}} and DA). 
Generally, we observe that performance worsens with DA only, while the proposed two augmentations (\textbf{\textit{S-A}} and \textbf{\textit{M-A}}) increase performance.
Such observations do not come as a surprise, indeed SSL relies on defining two views appropriate for specific data and tasks (\ie fMRI in this study).
Our best performance is obtained from SA$+$MA$+$DA.

\noindent \textbf{Effectiveness of different SSL strategies.}
We perform an experiment to compare different SSL strategies, \ie Contrastive-based SSL (CL), Reconstruction-based SSL (Re), and \ours{}. For CL, we replace Eq. (\ref{eq:loss}) with InfoNCE loss \cite{oord2018representation} and select negative samples randomly. For RE, we replace Eq. (\ref{eq:loss}) with MSE loss and add a decoder network.
Fig. \ref{fig_aug}  shows the superiority of \ours{} compared with other SSL strategies (CL and RE), which is consistent with our analysis. 

\begin{figure}[!t]
\centering
{
\subfloat{\scalebox{0.35}{\includegraphics{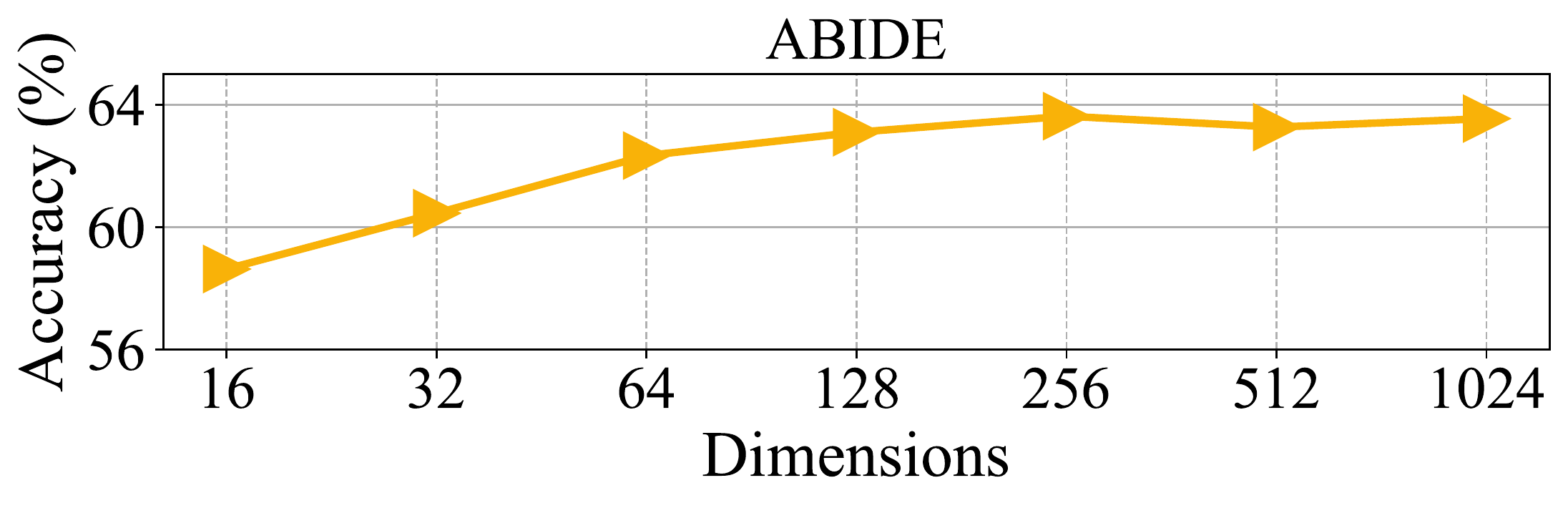}}}\\\vspace{-1mm}
\subfloat{\scalebox{0.35}{\includegraphics{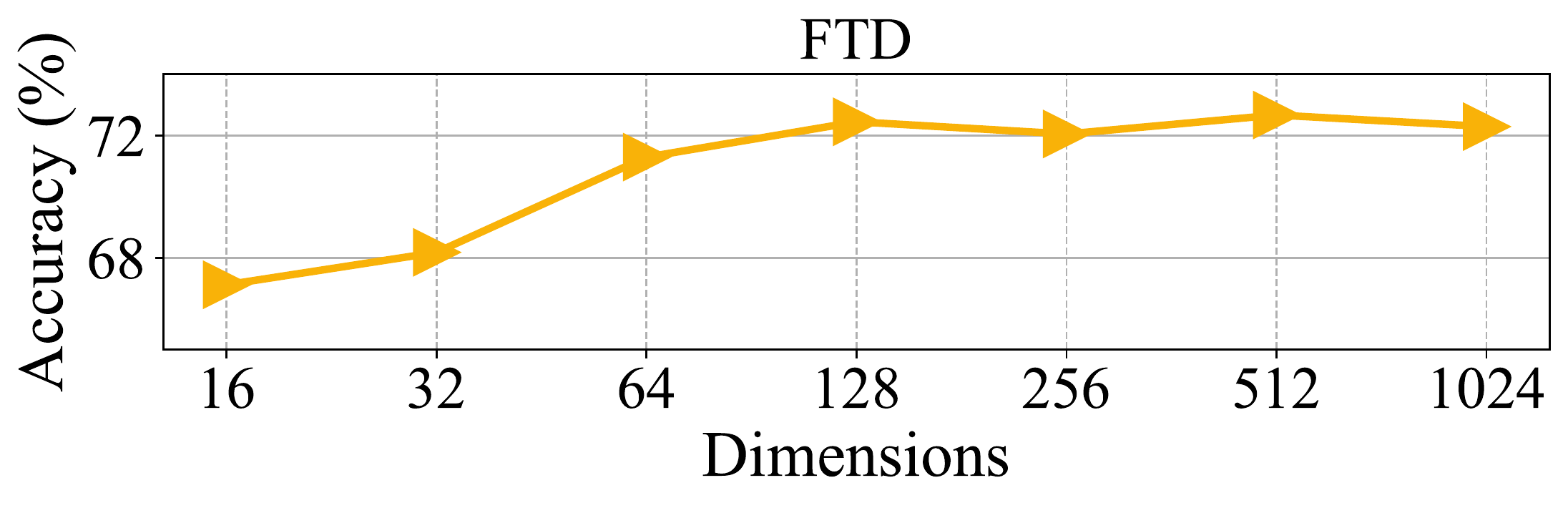}}}}
\caption{Accuracy of \ours{} with different numbers of embedding dimensions.}
\label{fig_dim}
\end{figure}

\begin{figure}[!t]
\centering
{
\subfloat{\scalebox{0.35}{\includegraphics{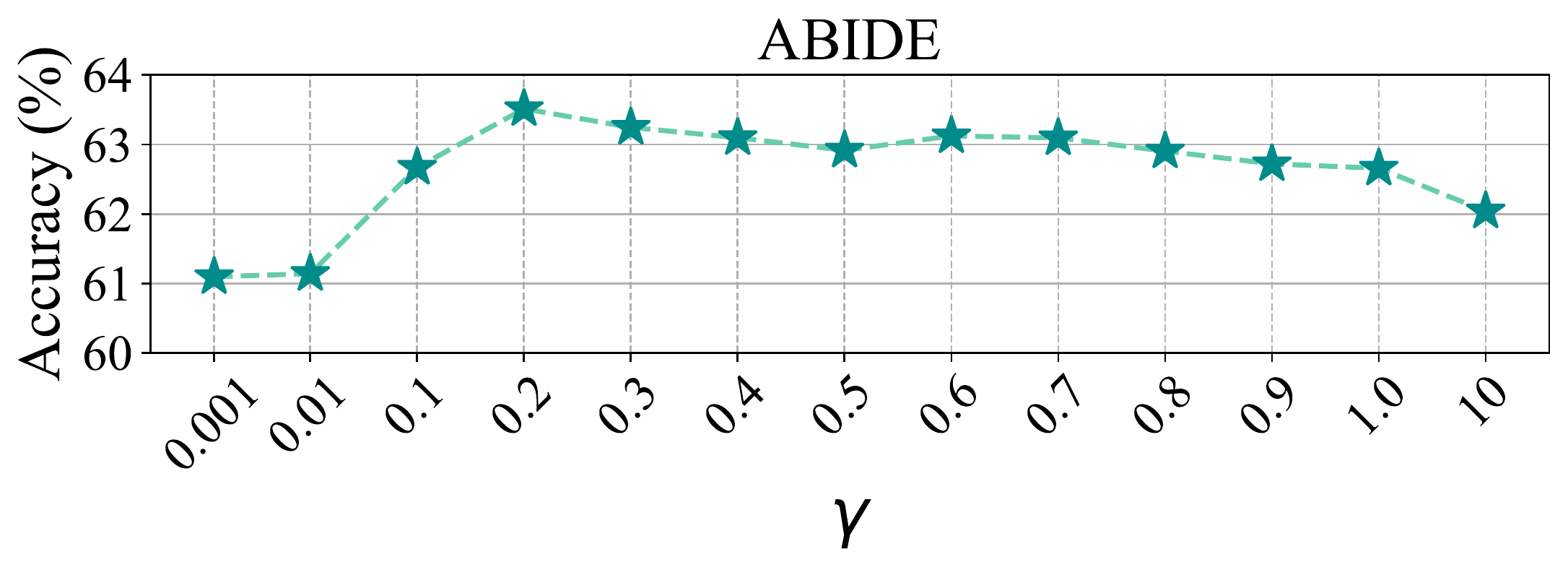}}}\\ \vspace{-3mm}
\subfloat{\scalebox{0.35}{\includegraphics{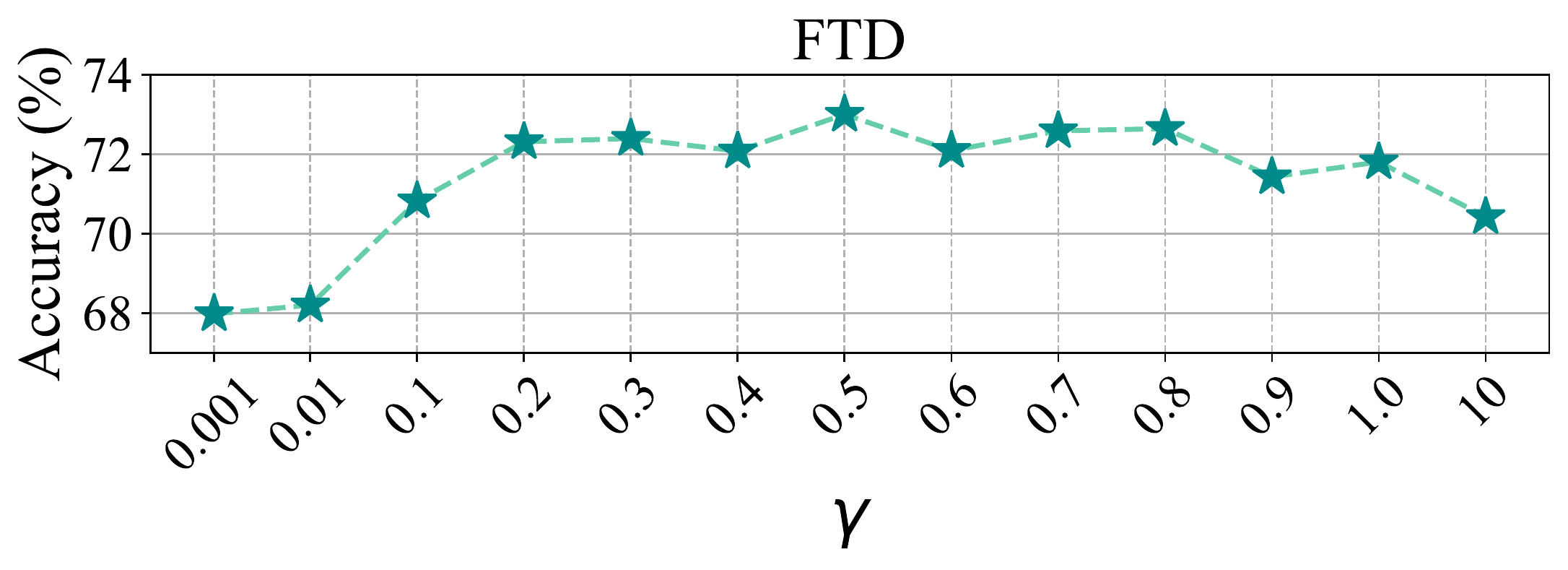}}}}
\caption{Accuracy of \ours{} with different values of $\gamma$ in Eq. (\ref{eq:loss}).}
\vspace{-2mm}
\label{fig_beta}
\end{figure}

\noindent \textbf{Different dimensional embedding.} 
We analyze the impacts of different numbers of embedding dimensions in our methods.
For this purpose, we set the range of the numbers of embedding dimensions as $\{16, 32, 64, 128, 256, 512, 1024\}$. 
As shown in Fig. \ref{fig_dim}, the accuracy of our method increased with the increasing numbers of embedding dimensions \ie from $16$ to $256$ on the ABIDE dataset and from $16$ to $128$ on the FTD dataset. On both two datasets, the performance peaks around $128$ to $256$ and then increases slowly. 
\revision{
Moreover, the  accuracy  is  slightly low  when  the  dimensionality is set in range of $[16, 64]$. The reason can be that low dimensional embeddings cannot capture the useful information well limited by its insufficient representational ability. The performance is stable for the high-dimensional embeddings, which indicate that \ours{} is insensitive to the numbers of embedding dimensions with high dimensional embeddings.  In this observation, we set the numbers of embedding dimensions to $256$ in all experiments.
}\revisionfoot{R1-Q8}


\begin{figure}[!t]
\centering
{
\subfloat[ABIDE]{\scalebox{0.3}{\includegraphics{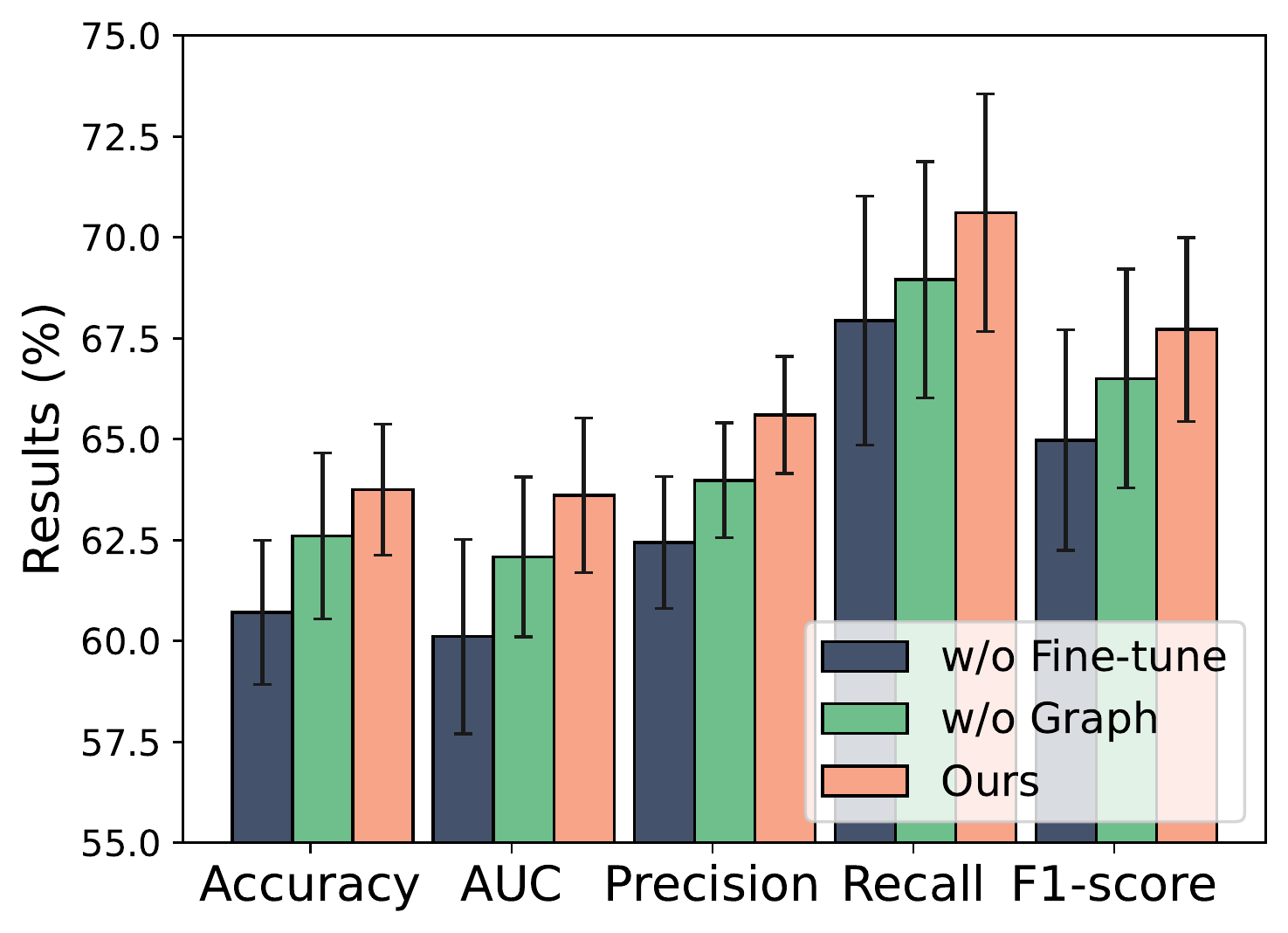}}} 
\subfloat[FTD]{\scalebox{0.3}{\includegraphics{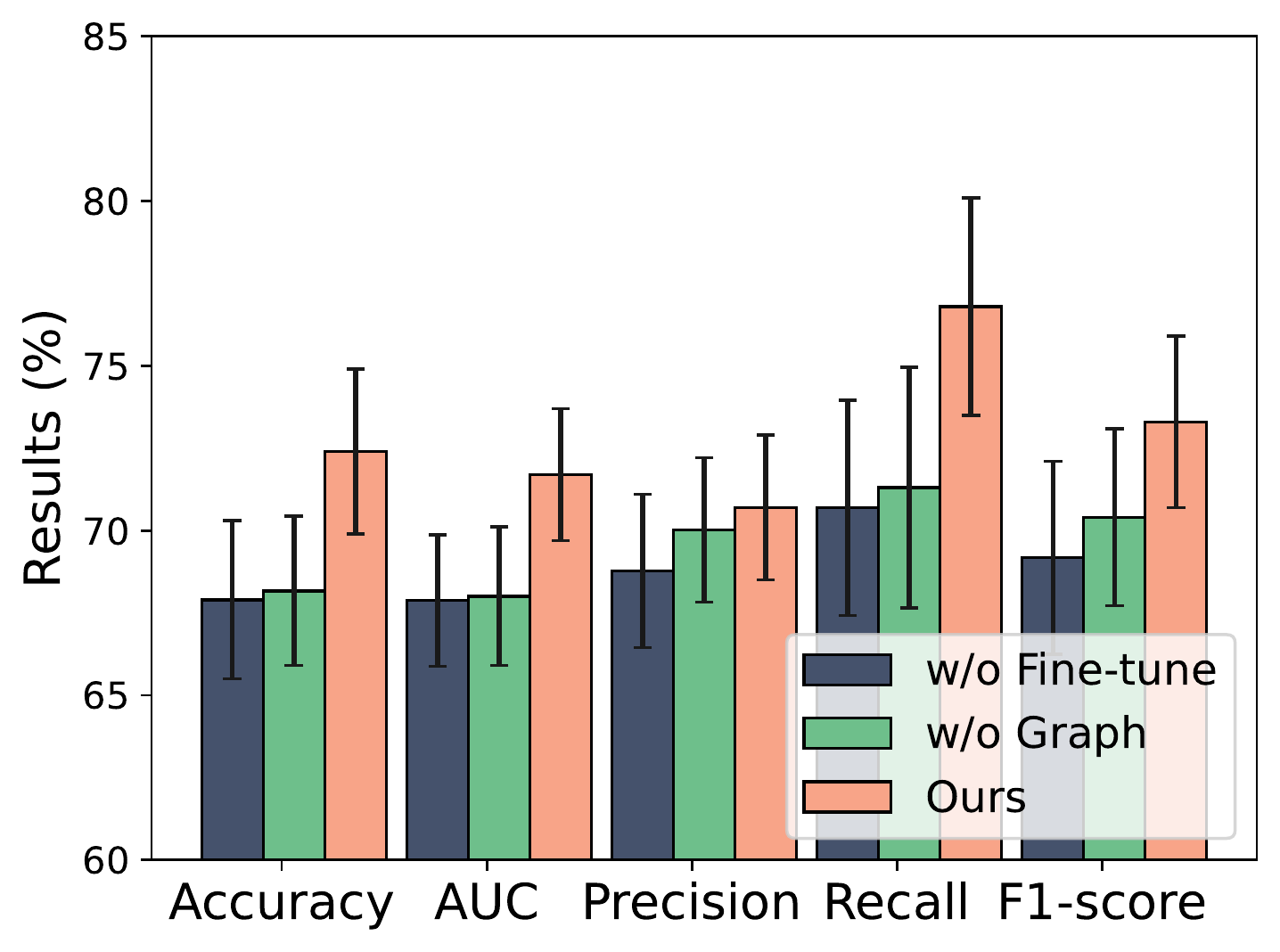}}}}
\caption{Accuracy of \ours{} without fine-tuning and \ours{} without considering graph information in SSL.}
\label{fig_fine}
\end{figure}

\noindent \textbf{Effectiveness of $\gamma$ in the objective function.}
We conduct an ablation experiment on the second term in the objective function.
The results in  Fig. \ref{fig_beta} correspond to $\gamma$ in Eq. \ref{eq:loss} varying from $0.001$ through $10$. We can see that \ours{} has poor performance on both two datasets when $\gamma$ is small (\eg $\gamma \le 0.1$ ), providing empirical evidence for our state that the second term in Eq. \ref{eq:loss} is necessary to get promising result.
Moreover, over a wide range of the value of $\gamma$ (\eg $0.1 \le \gamma \le 0.8$ ), \ours{} achieves a near-perfect performance on both two datasets, which shows \ours{} can achieve stability when $\gamma$ is in a reasonable range (\eg $0.2 \le \gamma  \le 0.8 $).
As expected, a large value of $\gamma$ in Eq. \ref{eq:loss} takes a hit on performance (\eg $0.9 \le \gamma$). The reason can be that a large value of $\gamma$ means the second term dominates the objective function, whereupon the model tends to produce more irrelevant embeddings rather than discriminative embeddings for downstream tasks.

\begin{figure}[!t]
\centering
{
\subfloat{\scalebox{0.21}{\includegraphics{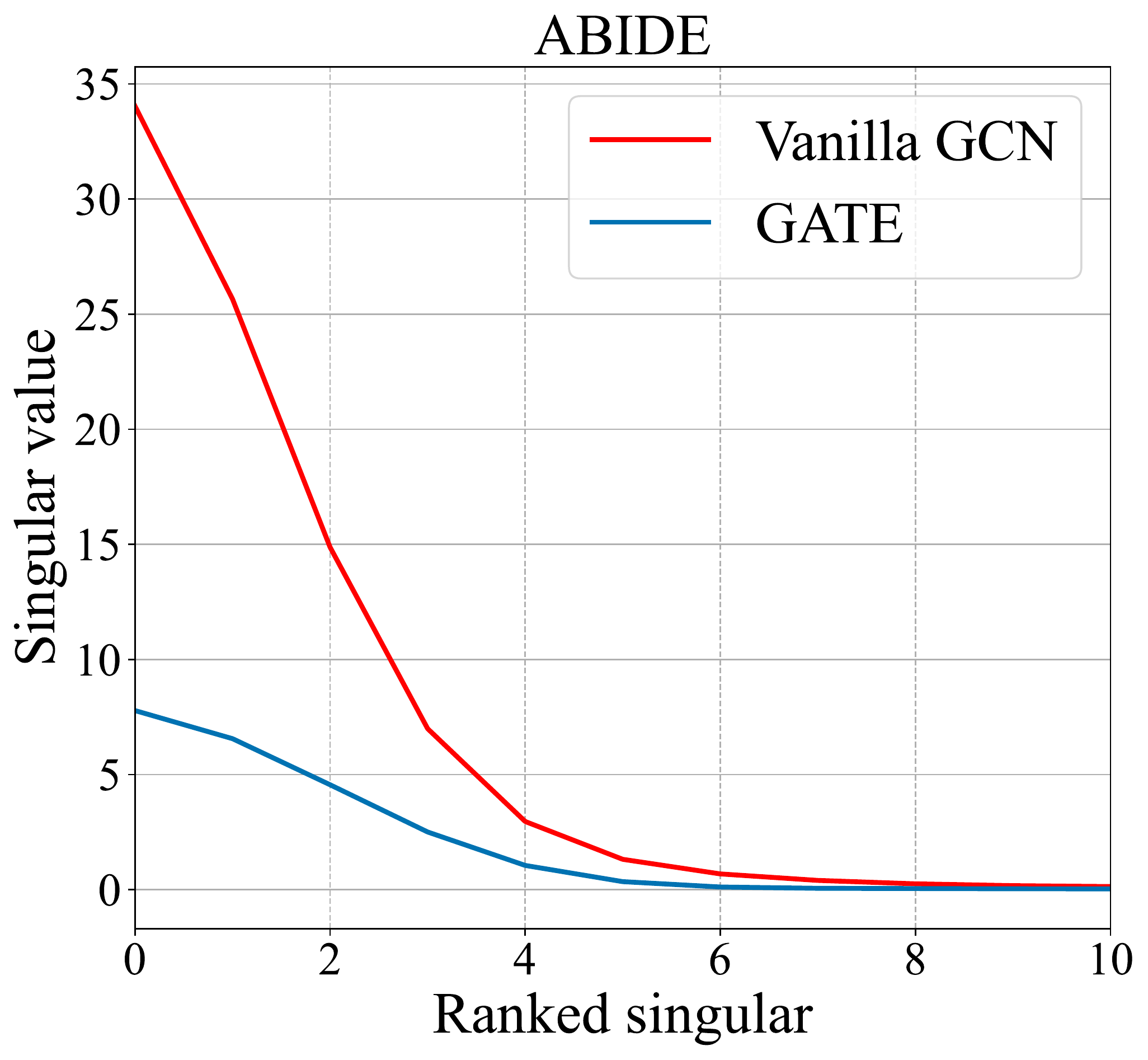}}}
\subfloat{\scalebox{0.21}{\includegraphics{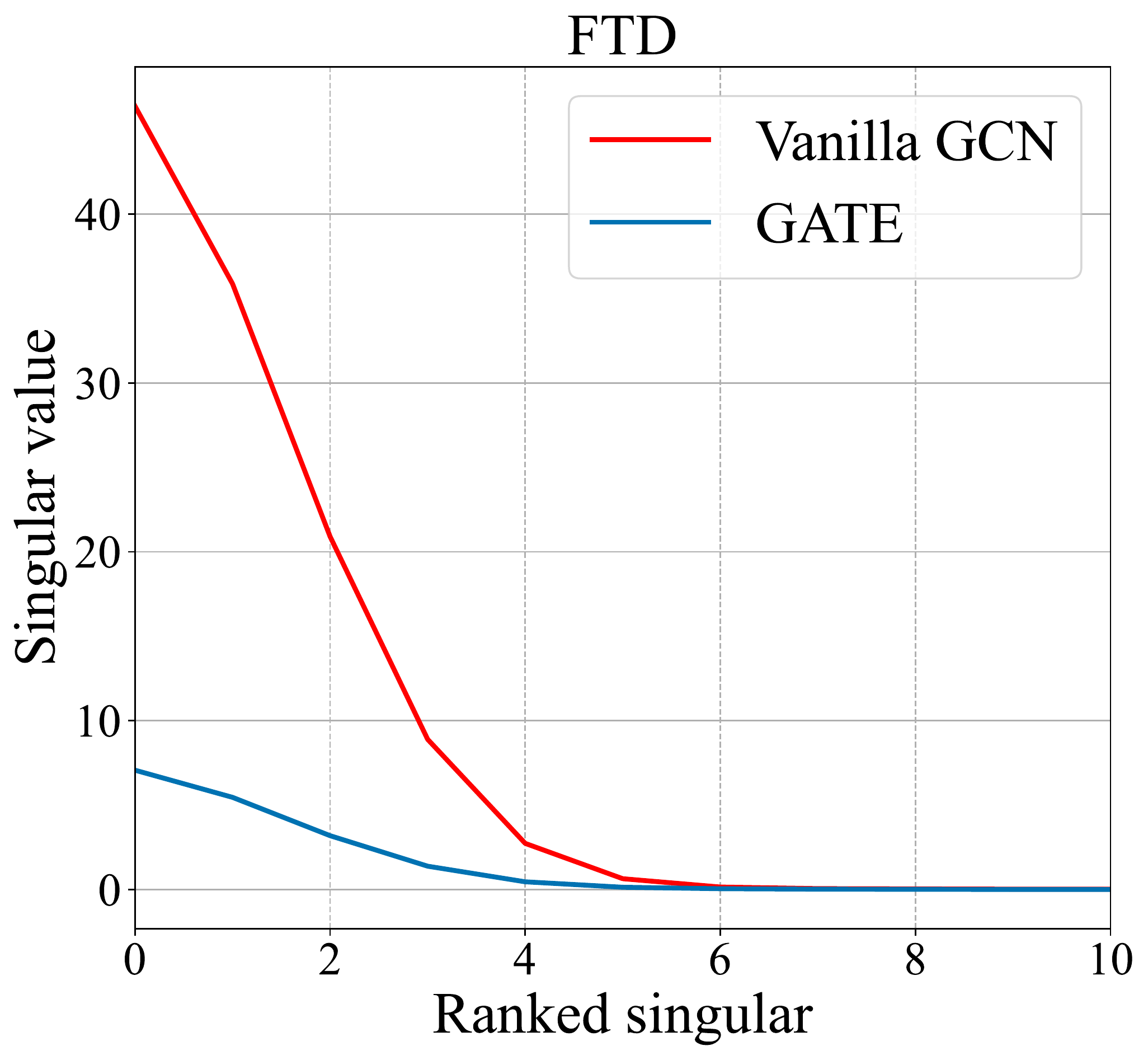}}}}
\caption{Comparison of the low rank representation between \ours{} and vanilla GCN.}
\label{fig_rank}
\end{figure}

\noindent \textbf{Effectiveness of fine-tuning and graph.}
To analyse the effectiveness of fine-tuning step and graph, we perform a comparison of \ours{} without fine-tuning step and \ours{} without graph in SSL (\ie replacing $\mathbf{A}$ with $\mathbf{I}$ in Eq. (\ref{eq:gcn}) in SSL step). 
As shown in Fig. \ref{fig_fine},  we clearly observe that both fine-tuning and graph are a necessary in our pipeline. 
In practice, fine-tuning step fits the practical medical deployment scenario where a small portion of labeled data can be obtained to fit specific disease in downstream tasks. 
Besides, graph information can provide common biomarkers between subjects for model in SSL step. Indeed, after SSL step, there is no need to keep the large graph structure for computational efficiency in fine-tuning step, which is able to make \ours{} easier to be applied in clinical practice.

\noindent \textbf{Low-rank representation.}
In Theorem \ref{th:main}, we state that \ours{} can obtain low-rank representation to decrease  upper bound of excess risk of downstream task.
We carry out Singular Value Decomposition (SVD) experiments on embeddings to support our statement.
According to Fig. \ref{fig_rank},  the singular values of \ours{} is remarkably smaller than vanilla GCN in both two datasets.

\revision{
\noindent \textbf{Parameter Sensitivity Analysis.} 
As we mentioned in Sec. \ref{sec_augmentation}, existing dynamic FC-based methods are sensitive to the parameters in the sliding window algorithm (\eg the length of the sliding window and the step of the sliding window) \cite{savva2019assessment,shakil2016evaluation}. 
We investigate  whether \ours{} is sensitive to the parameters used in the proposed two augmentation, which is related to the sliding window algorithm. 
To do this, we  conduct the parameter sensitivity studies on  proposed augmentations and report their results in Fig. \ref{fig_sa_mini} and Fig. \ref{fig_ma_mini}. 

First, in the augmentation \textbf{\textit{S-A}}, it includes two parameters, \ie the length of the sliding windows $L$ and the step of sliding windows $s$ which control the total number of the sliding windows.  Figure \ref{fig_sa_mini} shows the results obtained by traversing $L$ from $20$ to $70$ and $L$ from $10$ to $35$, with fixing the other settings. We could observe that \ours{} is insensitive to the parameters $L$ within a certain range of values and $s$.  Specifically, \ours{} can consistently 
obtain promising results except in the case of a tiny value of $L$, and the suggested value ranges of $L$ and $s$ are $[30, 60]$ and $[10, 35]$, respectively. 
Second, in the augmentation \textbf{\textit{M-A}}, we investigate the effect of gaps between two different scale sliding windows (\ie $\left \|  l_a - l_b \right \| $). 
Figure \ref{fig_ma_mini} shows the results obtained by traversing gaps from $5$ to $40$ with fixing the other settings (\eg the value of $L$ and the value of $s$ are set to $30$ and $15$, respectively). As a result,  the variation of the performance of \ours{} is relatively stable with the value ranges of gaps is  $[10, 40]$. This observation demonstrates that \ours{} is insensitive to the parameters settings of \textbf{\textit{M-A}} in our experiments.
In summary, the  reasons of the above observation could be that \ours{} maximizing the similarity of the two representations (\ie embeddings of two different sliding windows) to reduce the perturbations on different sliding windows.
} \revisionfoot{R2 R4}

\begin{figure}[h]
\centering
\scalebox{0.95}{
\subfloat[ABIDE]{\scalebox{0.4}{\includegraphics{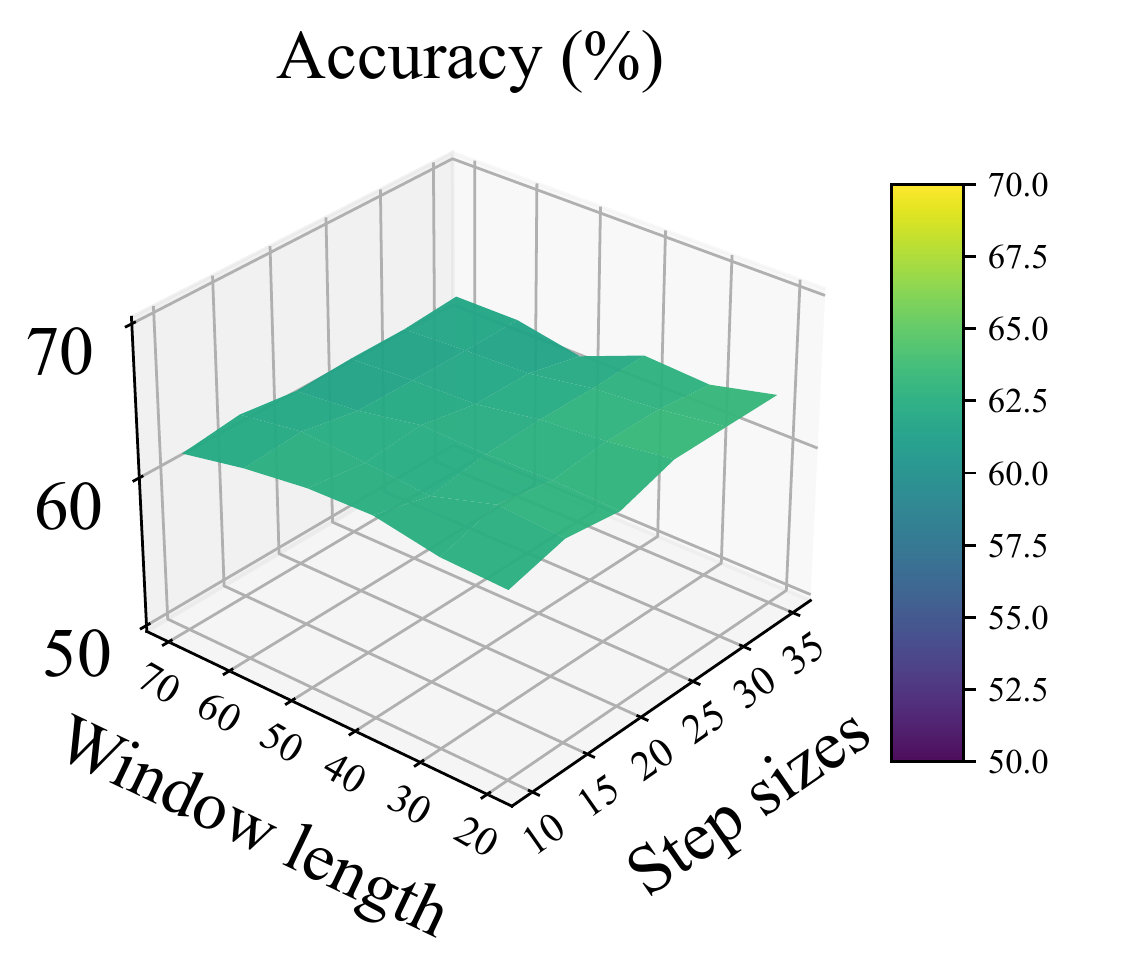}}}
\subfloat[FTD]{\scalebox{0.4}{\includegraphics{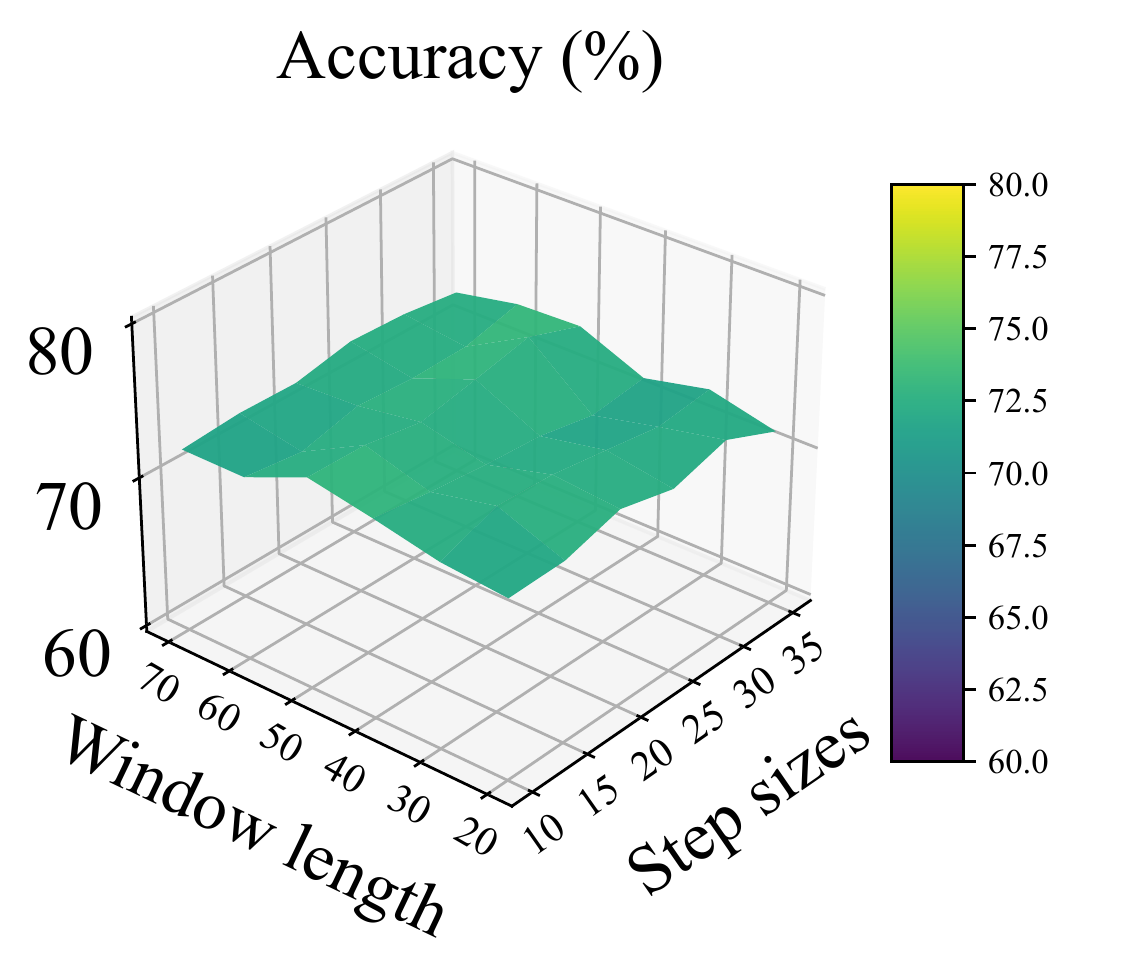}}} 
}
\caption{Accuracy of \ours{} at different parameter settings (\ie window length  and step size) on two dataset.}
\vspace{-2mm}
\label{fig_sa_mini}
\end{figure}
\begin{figure}[h]
\centering
\scalebox{1}{
\subfloat[ABIDE]{\scalebox{0.4}{\includegraphics{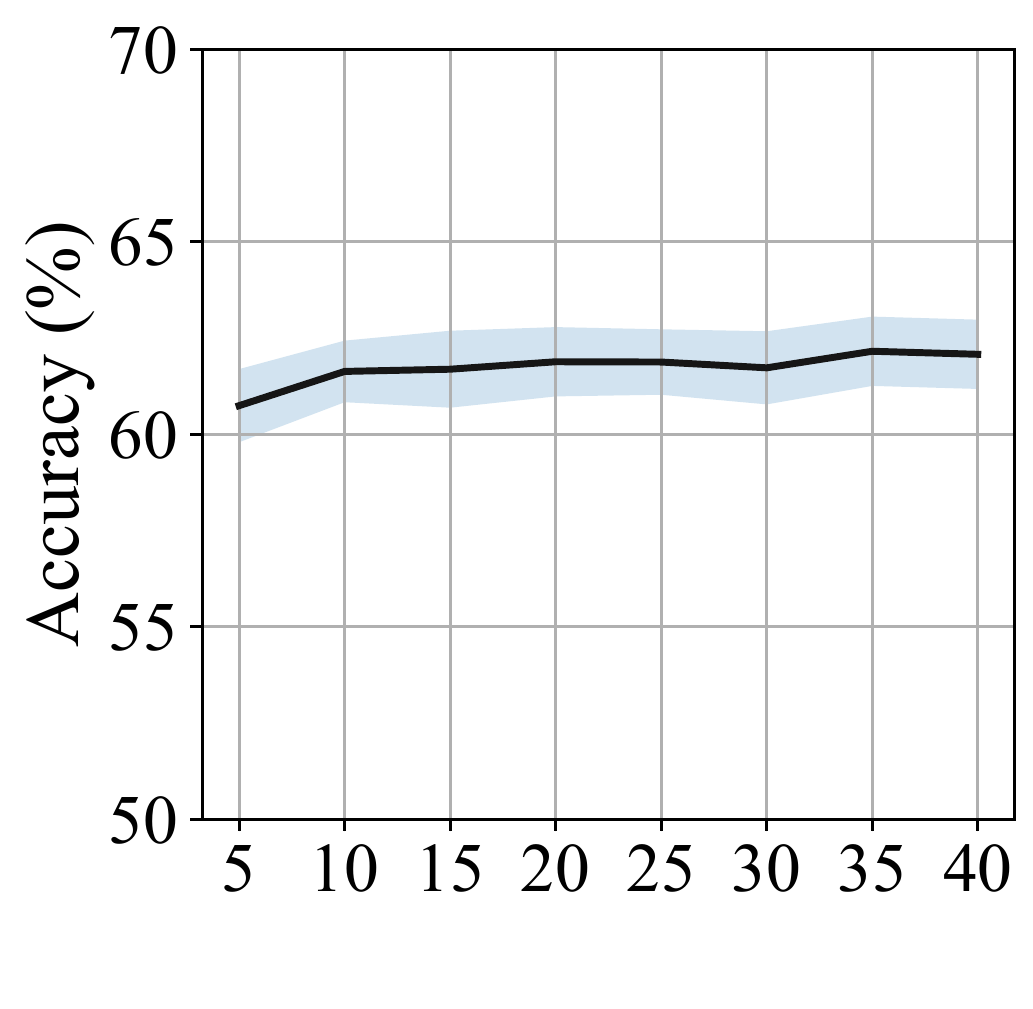}}} ~
\subfloat[FTD]{\scalebox{0.4}{\includegraphics{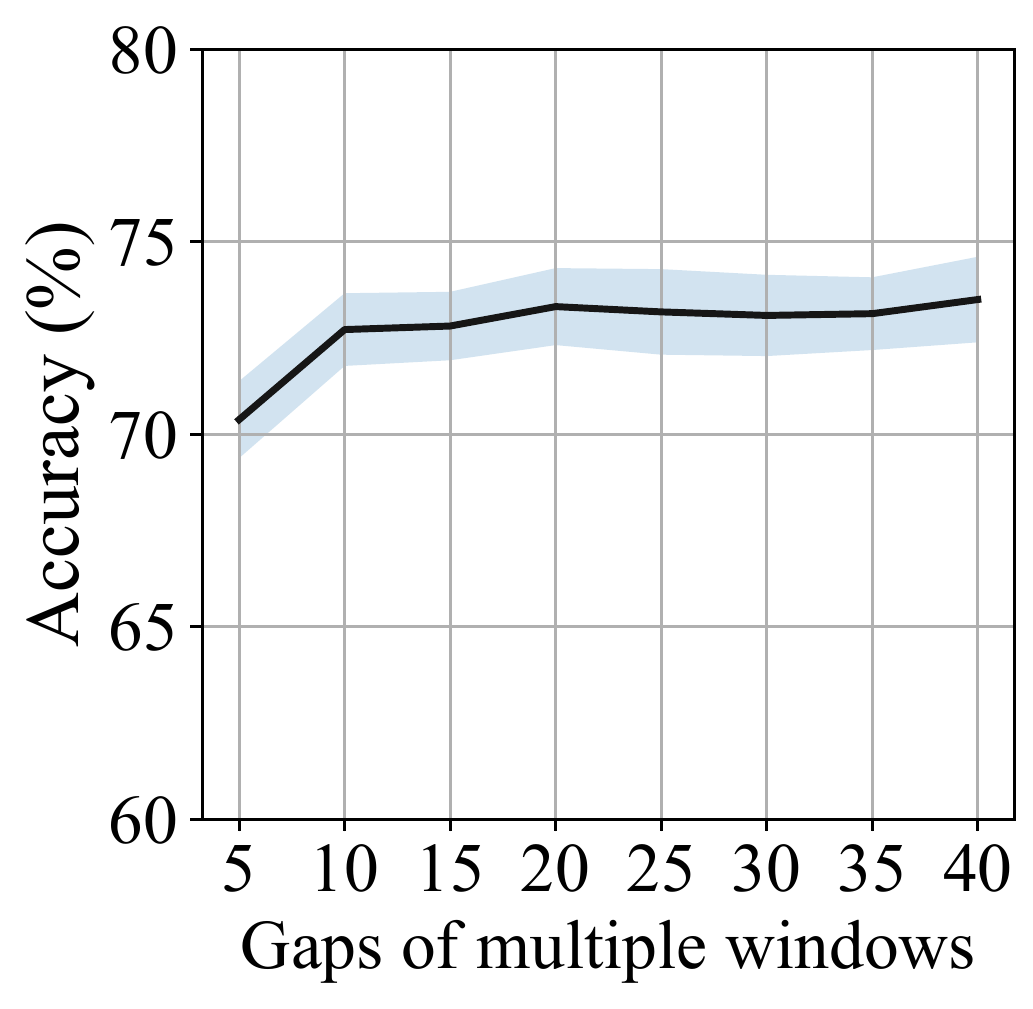}}} 
}
\caption{Accuracy of \ours{} at different parameter settings, \ie gaps of multiple windows (\ie $\|l_{a} - l_{b}\|$), on two dataset.}
\vspace{-2mm}
\label{fig_ma_mini}
\end{figure}

\section{Discussion} \label{sec_discussion}
\subsection{The needs of label efficiency for fMRI}
In recent years, fMRI analysis has revolutionized with deep learning, while training a powerful model usually requires a large amount of labeled fMRI data.
To visually explain the needs of large labeled data for fMRI analysis, we depict the results of vanilla GCN under different label rates in Fig.~\ref{fig2}. We can observe that the accuracy of the vanilla GCN decreases significantly with the decreasing labeling rate (\eg 10\% to 50\%).
\revisionR{However obtain a sufficient number of labeled images for fMRI data could be challenging due to the cost of obtaining neuro-disease annotations.}
Therefore, designing a method for limited labeled fMRI data is crucial~\cite{wang2019identifying}. 
To solve this limitation, we \revisionR{propose} a graph SSL method \ours{}, which can yield comparable results under label-efficient data (\eg the accuracy of \ours{} under 20\% labeling rate is comparable to the accuracy of vanilla GCN under 50\% labeling rate).
The main reason could be that \ours{} can help extract robust features on label-efficient data.
Consequently, our experiments show promising results to tackle the issue of limited labeled fMRI data.




\subsection{Graph learning for neuroimaging}

Deep graph learning models, like GCNs, have become popular approaches for populations disease analysis, including neuroimaging \cite{parisot2018disease,li2018brain}. Modeling subject-wise neuroimaging signals (and auxiliary phenotype features) could leverage the representation of similar subjects.  Our work investigates how adding similarity regularization affects GCN-based representation learning on fMRI signals. 
As shown in Fig. \ref{fig_fine} and Tab.~\ref{tab:1}, graph information is essential to get good results and \ours{} is significantly better than other GCN-based methods.
The reason could be better extracting associations between subjects and maximizing the correlation of the representations from multiple coupled views. Hence, we can conclude that graph learning can obtain embedding matrices of different views to characterize the representations of neuroimaging. 




\subsection{Technical contributions of our work} \label{sec_Technical}

We further analyze the technical contributions of \ours{}. 
\revisionR{Firstly, \ours{} utilizes an SSL-based strategy to capture vital relationship for generation multiple coupled views of a fMRI BOLD signal.}
We can achieve good performance of pre-training on unlabeled fMRI data and fine-tuning on small labeled
data. 
Then, we design a GCN encoder for limited labeled fMRI data to extract associations between subjects. Our GCN-based method effectively solves the issue of earning on label-efficient dataset.
\revision{
Moreover, \ours{} is a versatile method which can be further improved by considering a more powerful GNN encoder (\ie $f (\cdot)$) and combining a more powerful classifier (\ie $\psi(\cdot)$) for real application.
}
In general, self-supervised tasks provide informative priors that can benefit GCN in generalizable target performance.

\section{Conclusion}
In this paper, we propose \ours{}, a novel graph SSL method for fMRI analysis on the population graph. \ours{} can achieve outstanding classification performance without labeled data in the training phase.
Theoretical analysis analyzes demonstrate the appropriateness of the proposed augmentation strategy, CCA regularization loss in SSL for fMRI analysis, and the generalization bound of our method.
Experimental results show that \ours{} outperforms alternative SSL methods on two neuroimaging datasets with significant margins. 
Furthermore, it opens perspectives to bring deep learning from research to clinic by learning from noisy and limited labeled data.

\section{Acknowledgment}

This work was partially supported by the National Natural Science Foundation of
China (Grant No. 61876046), 
Medico-Engineering Cooperation Funds from University of Electronic Science and Technology of China (No. ZYGX2022YGRH009
and ZYGX2022YGRH014), 
the Guangxi “Bagui” Teams for Innovation and
Research, China, 
Natural Sciences and Engineering Research Council of Canada (GECR-2022-00430), 
NVIDIA Hardware Award, 
and Public Safety Canada (NS-5001-22170).

\normalem{

}


\begin{thebibliography}{10}
\providecommand{\url}[1]{#1}
\csname url@samestyle\endcsname
\providecommand{\newblock}{\relax}
\providecommand{\bibinfo}[2]{#2}
\providecommand{\BIBentrySTDinterwordspacing}{\spaceskip=0pt\relax}
\providecommand{\BIBentryALTinterwordstretchfactor}{4}
\providecommand{\BIBentryALTinterwordspacing}{\spaceskip=\fontdimen2\font plus
\BIBentryALTinterwordstretchfactor\fontdimen3\font minus
  \fontdimen4\font\relax}
\providecommand{\BIBforeignlanguage}[2]{{%
\expandafter\ifx\csname l@#1\endcsname\relax
\typeout{** WARNING: IEEEtran.bst: No hyphenation pattern has been}%
\typeout{** loaded for the language `#1'. Using the pattern for}%
\typeout{** the default language instead.}%
\else
\language=\csname l@#1\endcsname
\fi
#2}}
\providecommand{\BIBdecl}{\relax}
\BIBdecl

\bibitem{10.1227NEU}
S.~Lang, N.~Duncan, and G.~Northoff, ``{Resting-State Functional Magnetic
  Resonance Imaging: Review of Neurosurgical Applications},''
  \emph{Neurosurgery}, vol.~74, no.~5, pp. 453--465, 2014.

\bibitem{DynamicFC}
S.~Menon and K.~Krishnamurthy, ``A comparison of static and dynamic functional
  connectivities for identifying subjects and biological sex using intrinsic
  individual brain connectivity,'' \emph{Scientific Reports}, vol.~9, 2019.

\bibitem{li2018brain}
X.~Li, Y.~Zhou, N.~Dvornek, M.~Zhang, S.~Gao, J.~Zhuang, D.~Scheinost, L.~H.
  Staib, P.~Ventola, and J.~S. Duncan, ``Braingnn: Interpretable brain graph
  neural network for fmri analysis,'' vol.~74, 2021, p. 102233.

\bibitem{dvornek2019jointly}
N.~C. Dvornek, X.~Li, J.~Zhuang, and J.~S. Duncan, ``Jointly discriminative and
  generative recurrent neural networks for learning from fmri,'' in
  \emph{MLMI}.\hskip 1em plus 0.5em minus 0.4em\relax Springer, 2019, pp.
  382--390.

\bibitem{parisot2018disease}
S.~Parisot, S.~I. Ktena, E.~Ferrante, M.~Lee, R.~Guerrero, B.~Glocker, and
  D.~Rueckert, ``Disease prediction using graph convolutional networks:
  application to autism spectrum disorder and alzheimer’s disease,''
  \emph{Medical Image Analysis}, vol.~48, pp. 117--130, 2018.

\bibitem{zhou2021contrast}
Y.~Zhou, T.~Zhou, T.~Zhou, H.~Fu, J.~Liu, and L.~Shao, ``Contrast-attentive
  thoracic disease recognition with dual-weighting graph reasoning,''
  \emph{IEEE Transactions on Medical Imaging}, vol.~40, no.~4, pp. 1196--1206,
  2021.

\bibitem{ghorbani2022ra}
M.~Ghorbani, A.~Kazi, M.~S. Baghshah, H.~R. Rabiee, and N.~Navab, ``Ra-gcn:
  Graph convolutional network for disease prediction problems with imbalanced
  data,'' \emph{Medical Image Analysis}, vol.~75, p. 102272, 2022.

\bibitem{bessadok2021graph}
A.~Bessadok, M.~A. Mahjoub, and I.~Rekik, ``Graph neural networks in network
  neuroscience,'' \emph{arXiv preprint arXiv:2106.03535}, 2021.

\bibitem{hyman2020executive}
S.~L. Hyman, S.~E. Levy, S.~M. Myers, D.~Kuo, S.~Apkon, T.~Brei, L.~F.
  Davidson, B.~E. Davis, K.~A. Ellerbeck, G.~H. Noritz \emph{et~al.},
  ``Executive summary: identification, evaluation, and management of children
  with autism spectrum disorder,'' \emph{Pediatrics}, vol. 145, no.~1, 2020.

\bibitem{luo2017label}
Z.~Luo, Y.~Zou, J.~Hoffman, and L.~F. Fei-Fei, ``Label efficient learning of
  transferable representations acrosss domains and tasks,'' in \emph{NIPS},
  2017.

\bibitem{sun2021context}
L.~Sun, K.~Yu, and K.~Batmanghelich, ``Context matters: Graph-based
  self-supervised representation learning for medical images,'' in \emph{AAAI},
  vol.~35, 2021, pp. 4874--4882.

\bibitem{chen2020simple}
T.~Chen, S.~Kornblith, M.~Norouzi, and G.~Hinton, ``A simple framework for
  contrastive learning of visual representations,'' in \emph{ICML}, 2020, pp.
  1597--1607.

\bibitem{NEURIPS2020_63c3ddcc}
C.-Y. Chuang, J.~Robinson, Y.-C. Lin, A.~Torralba, and S.~Jegelka, ``Debiased
  contrastive learning,'' in \emph{NIPS}, vol.~33, 2020, pp. 8765--8775.

\bibitem{oord2018representation}
A.~v.~d. Oord, Y.~Li, and O.~Vinyals, ``Representation learning with
  contrastive predictive coding,'' \emph{arXiv preprint arXiv:1807.03748},
  2018.

\bibitem{mo2022simple}
Y.~Mo, L.~Peng, J.~Xu, X.~Shi, and X.~Zhu, ``Simple unsupervised graph
  representation learning,'' in \emph{AAAI}, 2022.

\bibitem{he2021masked}
K.~He, X.~Chen, S.~Xie, Y.~Li, P.~Doll{\'a}r, and R.~Girshick, ``Masked
  autoencoders are scalable vision learners,'' \emph{arXiv preprint
  arXiv:2111.06377}, 2021.

\bibitem{qiu2022vgaer}
C.~Qiu, Z.~Huang, W.~Xu, and H.~Li, ``Vgaer: graph neural network
  reconstruction based community detection,'' in \emph{AAAI}, 2022.

\bibitem{grill2020bootstrap}
J.-B. Grill, F.~Strub, F.~Altch{\'e}, C.~Tallec, P.~Richemond, E.~Buchatskaya,
  C.~Doersch, B.~Avila~Pires, Z.~Guo, M.~Gheshlaghi~Azar \emph{et~al.},
  ``Bootstrap your own latent-a new approach to self-supervised learning,'' in
  \emph{NIPS}, vol.~33, 2020, pp. 21\,271--21\,284.

\bibitem{he2020momentum}
K.~He, H.~Fan, Y.~Wu, S.~Xie, and R.~Girshick, ``Momentum contrast for
  unsupervised visual representation learning,'' in \emph{CVPR}, 2020, pp.
  9729--9738.

\bibitem{ni2021close}
R.~Ni, M.~Shu, H.~Souri, M.~Goldblum, and T.~Goldstein, ``The close
  relationship between contrastive learning and meta-learning,'' in
  \emph{ICLR}, 2021.

\bibitem{tian2020makes}
Y.~Tian, C.~Sun, B.~Poole, D.~Krishnan, C.~Schmid, and P.~Isola, ``What makes
  for good views for contrastive learning?'' \emph{Advances in Neural
  Information Processing Systems}, vol.~33, pp. 6827--6839, 2020.

\bibitem{wang2022chaos}
Y.~Wang, Q.~Zhang, Y.~Wang, J.~Yang, and Z.~Lin, ``Chaos is a ladder: A new
  theoretical understanding of contrastive learning via augmentation overlap,''
  in \emph{ICLR}, 2022.

\bibitem{chen2020self}
Y.~Chen, C.~Wei, A.~Kumar, and T.~Ma, ``Self-training avoids using spurious
  features under domain shift,'' in \emph{NIPS}, 2020, pp. 21\,061--21\,071.

\bibitem{thompson1984canonical}
B.~Thompson, \emph{Canonical correlation analysis: Uses and
  interpretation}.\hskip 1em plus 0.5em minus 0.4em\relax Sage, 1984, no.~47.

\bibitem{friman2003adaptive}
O.~Friman, M.~Borga, P.~Lundberg, and H.~Knutsson, ``Adaptive analysis of fmri
  data,'' \emph{NeuroImage}, vol.~19, no.~3, pp. 837--845, 2003.

\bibitem{kam2018novel}
T.-E. Kam, H.~Zhang, and D.~Shen, ``A novel deep learning framework on brain
  functional networks for early mci diagnosis,'' in \emph{MICCAI}, 2018, pp.
  293--301.

\bibitem{yao2021mutual}
D.~Yao, J.~Sui, M.~Wang, E.~Yang, Y.~Jiaerken, N.~Luo \emph{et~al.}, ``A mutual
  multi-scale triplet graph convolutional network for classification of brain
  disorders using functional or structural connectivity,'' \emph{IEEE
  Transactions on Medical Imaging}, vol.~40, no.~4, pp. 1279--1289, 2021.

\bibitem{wang2022multi}
N.~Wang, D.~Yao, L.~Ma, and M.~Liu, ``Multi-site clustering and nested feature
  extraction for identifying autism spectrum disorder with resting-state
  fmri,'' \emph{Medical Image Analysis}, vol.~75, p. 102279, 2022.

\bibitem{heeger2002does}
D.~J. Heeger and D.~Ress, ``What does fmri tell us about neuronal activity?''
  \emph{Nature reviews neuroscience}, vol.~3, no.~2, pp. 142--151, 2002.

\bibitem{huang2020attention}
J.~Huang, L.~Zhou, L.~Wang, and D.~Zhang, ``Attention-diffusion-bilinear neural
  network for brain network analysis,'' \emph{IEEE Transactions on Medical
  Imaging}, vol.~39, no.~7, pp. 2541--2552, 2020.

\bibitem{zhang2016neural}
J.~Zhang, W.~Cheng, Z.~Liu, K.~Zhang, X.~Lei, Y.~Yao \emph{et~al.}, ``Neural,
  electrophysiological and anatomical basis of brain-network variability and
  its characteristic changes in mental disorders,'' \emph{Brain}, vol. 139,
  no.~8, pp. 2307--2321, 2016.

\bibitem{mao2017gender}
N.~Mao, H.~Zheng, Z.~Long, L.~Yao, and X.~Wu, ``Gender differences in dynamic
  functional connectivity based on resting-state fmri,'' in \emph{EMBC}, 2017,
  pp. 2940--2943.

\bibitem{mertzios2019sliding}
G.~B. Mertzios, H.~Molter, and V.~Zamaraev, ``Sliding window temporal graph
  coloring,'' in \emph{AAAI}, vol.~33, no.~01, 2019, pp. 7667--7674.

\bibitem{wang2019spatial}
M.~Wang, C.~Lian, D.~Yao, D.~Zhang, M.~Liu, and D.~Shen, ``Spatial-temporal
  dependency modeling and network hub detection for functional mri analysis via
  convolutional-recurrent network,'' \emph{IEEE Transactions on Biomedical
  Engineering}, vol.~67, no.~8, pp. 2241--2252, 2019.

\bibitem{yao2020temporal}
D.~Yao, J.~Sui, E.~Yang, P.-T. Yap, D.~Shen, and M.~Liu, ``Temporal-adaptive
  graph convolutional network for automated identification of major depressive
  disorder using resting-state fmri,'' in \emph{MLMI}, 2020.

\bibitem{yu2022disentangling}
X.~Yu, L.~Zhang, L.~Zhao, Y.~Lyu, T.~Liu, and D.~Zhu, ``Disentangling
  spatial-temporal functional brain networks via twin-transformers,''
  \emph{arXiv preprint arXiv:2204.09225}, 2022.

\bibitem{shen2017deep}
D.~Shen, G.~Wu, and H.-I. Suk, ``Deep learning in medical image analysis,''
  \emph{Annual review of biomedical engineering}, vol.~19, pp. 221--248, 2017.

\bibitem{fan2019birnet}
J.~Fan, X.~Cao, P.-T. Yap, and D.~Shen, ``Birnet: Brain image registration
  using dual-supervised fully convolutional networks,'' \emph{Medical Image
  Analysis}, vol.~54, pp. 193--206, 2019.

\bibitem{kazi2019inceptiongcn}
A.~Kazi, S.~Shekarforoush, S.~A. Krishna, H.~Burwinkel, G.~Vivar,
  K.~Kort{\"u}m, S.-A. Ahmadi, S.~Albarqouni, and N.~Navab, ``Inceptiongcn:
  receptive field aware graph convolutional network for disease prediction,''
  in \emph{IPMI}.\hskip 1em plus 0.5em minus 0.4em\relax Springer, 2019, pp.
  73--85.

\bibitem{xing2019dynamic}
X.~Xing, Q.~Li, H.~Wei, M.~Zhang, Y.~Zhan, X.~S. Zhou, Z.~Xue, and F.~Shi,
  ``Dynamic spectral graph convolution networks with assistant task training
  for early mci diagnosis,'' in \emph{MICCAI}, 2019, pp. 639--646.

\bibitem{DGI}
P.~Velickovic, W.~Fedus, W.~L. Hamilton, P.~Li{\`o}, Y.~Bengio, and R.~D.
  Hjelm, ``Deep graph infomax.'' in \emph{ICLR}, 2019.

\bibitem{hassani2020contrastive}
K.~Hassani and A.~H. Khasahmadi, ``Contrastive multi-view representation
  learning on graphs,'' in \emph{ICML}, 2020, pp. 4116--4126.

\bibitem{schober2018correlation}
P.~Schober, C.~Boer, and L.~A. Schwarte, ``Correlation coefficients:
  appropriate use and interpretation,'' \emph{Anesthesia \& Analgesia}, vol.
  126, no.~5, pp. 1763--1768, 2018.

\bibitem{savva2019assessment}
A.~D. Savva, G.~D. Mitsis, and G.~K. Matsopoulos, ``Assessment of dynamic
  functional connectivity in resting-state fmri using the sliding window
  technique,'' \emph{Brain and behavior}, vol.~9, no.~4, p. 1255, 2019.

\bibitem{shakil2016evaluation}
S.~Shakil, C.-H. Lee, and S.~D. Keilholz, ``Evaluation of sliding window
  correlation performance for characterizing dynamic functional connectivity
  and brain states,'' \emph{Neuroimage}, vol. 133, pp. 111--128, 2016.

\bibitem{8765628}
T.-E. Kam, H.~Zhang, Z.~Jiao, and D.~Shen, ``Deep learning of static and
  dynamic brain functional networks for early mci detection,'' \emph{IEEE
  Transactions on Medical Imaging}, vol.~39, no.~2, pp. 478--487, 2020.

\bibitem{thakoor2021bootstrapped}
S.~Thakoor, C.~Tallec, M.~G. Azar, M.~Azabou, E.~L. Dyer, R.~Munos,
  P.~Veličković, and M.~Valko, ``Large-scale representation learning on
  graphs via bootstrapping,'' \emph{arXiv preprint arXiv:2102.06514}, 2021.

\bibitem{zhang2021canonical}
H.~Zhang, Q.~Wu, J.~Yan, D.~Wipf, and P.~S. Yu, ``From canonical correlation
  analysis to self-supervised graph neural networks,'' in \emph{NIPS}, vol.~34,
  2021.

\bibitem{kipf2016semi}
T.~N. Kipf and M.~Welling, ``Semi-supervised classification with graph
  convolutional networks,'' in \emph{ICLR}, 2017.

\bibitem{GAT}
P.~Veli{\v{c}}kovi{\'c}, G.~Cucurull, A.~Casanova, A.~Romero, P.~Lio, and
  Y.~Bengio, ``Graph attention networks,'' in \emph{ICLR}, 2018.

\bibitem{GINConv}
K.~Xu, W.~Hu, J.~Leskovec, and S.~Jegelka, ``How powerful are graph neural
  networks?'' \emph{arXiv preprint arXiv:1810.00826}, 2018.

\bibitem{wei2020theoretical}
C.~Wei, K.~Shen, Y.~Chen, and T.~Ma, ``Theoretical analysis of self-training
  with deep networks on unlabeled data,'' in \emph{ICLR}, 2021.

\bibitem{zhuang2020technical}
X.~Zhuang, Z.~Yang, and D.~Cordes, ``A technical review of canonical
  correlation analysis for neuroscience applications,'' \emph{Human Brain
  Mapping}, vol.~41, no.~13, pp. 3807--3833, 2020.

\bibitem{tishby2000information}
N.~Tishby, F.~C. Pereira, and W.~Bialek, ``The information bottleneck method,''
  \emph{arXiv preprint arXiv:0004057}, 2000.

\bibitem{ghojogh2021kkt}
B.~Ghojogh, A.~Ghodsi, F.~Karray, and M.~Crowley, ``Kkt conditions, first-order
  and second-order optimization, and distributed optimization: Tutorial and
  survey,'' \emph{arXiv preprint arXiv:2110.01858}, 2021.

\bibitem{lee2021predicting}
J.~D. Lee, Q.~Lei, N.~Saunshi, and J.~Zhuo, ``Predicting what you already know
  helps: Provable self-supervised learning,'' in \emph{NIPS}, vol.~34, 2021.

\bibitem{makur2015efficient}
A.~Makur, F.~Kozynski, S.-L. Huang, and L.~Zheng, ``An efficient algorithm for
  information decomposition and extraction,'' in \emph{Allerton}.\hskip 1em
  plus 0.5em minus 0.4em\relax IEEE, 2015, pp. 972--979.

\bibitem{song2021graph}
X.~Song, F.~Zhou, A.~F. Frangi, J.~Cao, X.~Xiao, Y.~Lei, T.~Wang, and B.~Lei,
  ``Graph convolution network with similarity awareness and adaptive
  calibration for disease-induced deterioration prediction,'' \emph{Medical
  Image Analysis}, vol.~69, p. 101947, 2021.

\bibitem{yan2010dparsf}
C.~Yan and Y.~Zang, ``Dparsf: a matlab toolbox for" pipeline" data analysis of
  resting-state fmri,'' \emph{Frontiers in Systems Neuroscience}, vol.~4,
  p.~13, 2010.

\bibitem{loshchilov2017decoupled}
I.~Loshchilov and F.~Hutter, ``Decoupled weight decay regularization,''
  \emph{arXiv preprint arXiv:1711.05101}, 2017.

\bibitem{yang2020revisiting}
C.~Yang, R.~Wang, S.~Yao, S.~Liu, and T.~Abdelzaher, ``Revisiting
  over-smoothing in deep gcns,'' \emph{arXiv preprint arXiv:2003.13663}, 2020.

\bibitem{saunshi2019theoretical}
N.~Saunshi, O.~Plevrakis, S.~Arora, M.~Khodak, and H.~Khandeparkar, ``A
  theoretical analysis of contrastive unsupervised representation learning,''
  in \emph{ICML}, 2019, pp. 5628--5637.

\bibitem{wang2019identifying}
M.~Wang, D.~Zhang, J.~Huang, P.-T. Yap, D.~Shen, and M.~Liu, ``Identifying
  autism spectrum disorder with multi-site fmri via low-rank domain
  adaptation,'' \emph{IEEE transactions on medical imaging}, vol.~39, no.~3,
  pp. 644--655, 2019.

\end{thebibliography}
\end{document}